\documentclass{article}

\usepackage{microtype}
\usepackage{graphicx}
\usepackage{subcaption}
\usepackage{booktabs} 

\usepackage{xcolor}   
\usepackage{pifont}   

\usepackage{hyperref}

\usepackage{multirow}

\usepackage[accepted]{icml2026}

\usepackage{amsmath}
\usepackage{amssymb}
\usepackage{mathtools}
\usepackage{amsthm}

\usepackage[capitalize,noabbrev]{cleveref}

\usepackage{enumitem}
\theoremstyle{plain}
\newtheorem{theorem}{Theorem}[section]

\newtheorem{lemma}[theorem]{Lemma}

\theoremstyle{definition}

\newtheorem{assumption}[theorem]{Assumption}
\theoremstyle{remark}

\usepackage[textsize=tiny]{todonotes}

\icmltitlerunning{RA-Det: Towards Universal Detection of AI-Generated Images via Robustness Asymmetry}

\begin{document}

\twocolumn[
  \icmltitle{RA-Det: Towards Universal Detection of AI-Generated Images via \\Robustness Asymmetry}



  \icmlsetsymbol{equal}{*}
  \begin{icmlauthorlist}
    \icmlauthor{Xinchang Wang}{equal,jnu}
    \icmlauthor{Yunhao Chen}{equal,fdu}
    \icmlauthor{Yuechen Zhang}{jnu}
    \icmlauthor{Congcong Bian}{jnu}
    \icmlauthor{Zihao Guo}{jnu}
    \icmlauthor{Xingjun Ma}{fdu}
    \icmlauthor{Hui Li}{jnu}
  \end{icmlauthorlist}

  \icmlaffiliation{jnu}{Jiangnan University}
  \icmlaffiliation{fdu}{Fudan University}

  \icmlcorrespondingauthor{Hui Li}{lihui.cv@jiangnan.edu.cn}

  \icmlkeywords{Machine Learning, ICML}

  \vskip 0.3in
]



\printAffiliationsAndNotice{\icmlEqualContribution}

\begin{abstract}
Recent image generators produce photo-realistic content that undermines the reliability of downstream recognition systems. As visual appearance cues become less pronounced, appearance-driven detectors that rely on forensic cues or high-level representations lose stability. This motivates a shift from appearance to behavior, focusing on how images respond to controlled perturbations rather than how they look. In this work, we identify a simple and universal behavioral signal. Natural images preserve stable semantic representations under small, structured perturbations, whereas generated images exhibit markedly larger feature drift. We refer to this phenomenon as \textbf{robustness asymmetry} and provide a theoretical analysis that establishes a lower bound connecting this asymmetry to memorization tendencies in generative models, explaining its prevalence across architectures. Building on this insight, we introduce Robustness Asymmetry Detection (RA-Det), a behavior-driven detection framework that converts robustness asymmetry into a reliable decision signal. Evaluated across 14 diverse generative models and against more than 10 strong detectors, RA-Det achieves superior performance, improving the average performance by 7.81\%. The method is data- and model-agnostic, requires no generator fingerprints, and transfers across unseen generators. Together, these results indicate that robustness asymmetry is a stable, general cue for synthetic-image detection and that carefully designed probing can turn this cue into a practical, universal detector. The source code is publicly available at 
\href{https://github.com/dongdongunique/RA-Det}{GitHub}.

\end{abstract}

\section{Introduction}

\label{sec:intro}
In recent years, generative models have evolved from early experimental frameworks into mature systems capable of producing highly realistic and often indistinguishable generated images. These technologies, alongside user-friendly platforms~\citep{karras2019stylegan,rombach2022ldm,saharia2022imagen,ramesh2022dalle2,midjourney,runwayml}, empower users to create hyper-realistic content with ease, bringing about the phenomenon where \textit{seeing is no longer believing}~\citep{nightingale2022indistinguishable}. 

\begin{figure}[tb]
    \centering
    \includegraphics[width=1.0\columnwidth]{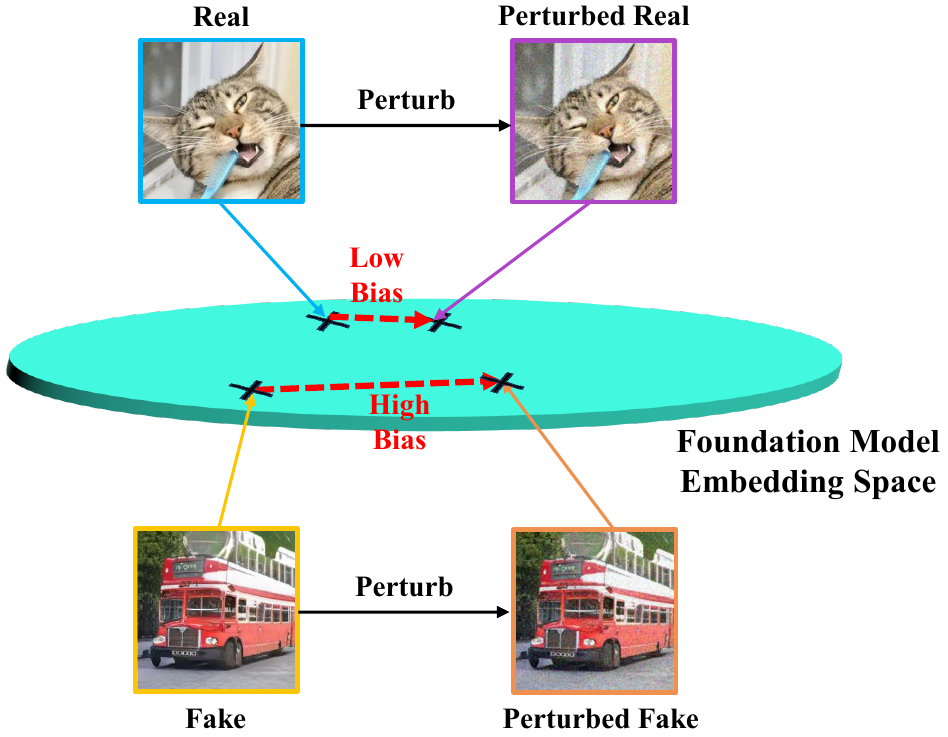}
    \caption{Robustness asymmetry between natural and synthetic images in embedding spaces (CLIP~\cite{radford2021learning}, DINO~\cite{siméoni2025dinov3}). Under perturbation, show minimal embedding displacement, while synthetic images exhibit large representation drift (high displacement).}
    \label{fig:RFNT-assumption}
\end{figure}


However, the diversity and convenience of generative model technologies, coupled with their ability to generate highly realistic or generated images, also introduce potential risks of misuse and malicious applications\citep{Wang2023SecurityAP,lin2024detecting}, such as generating fake information, forging identities and disseminating inappropriate content. Thus, detecting generated images becomes increasingly critical.


\begin{figure}[t]
\centering
\setlength{\tabcolsep}{0pt}

\begin{tabular}{cc}

\includegraphics[width=0.5\columnwidth]{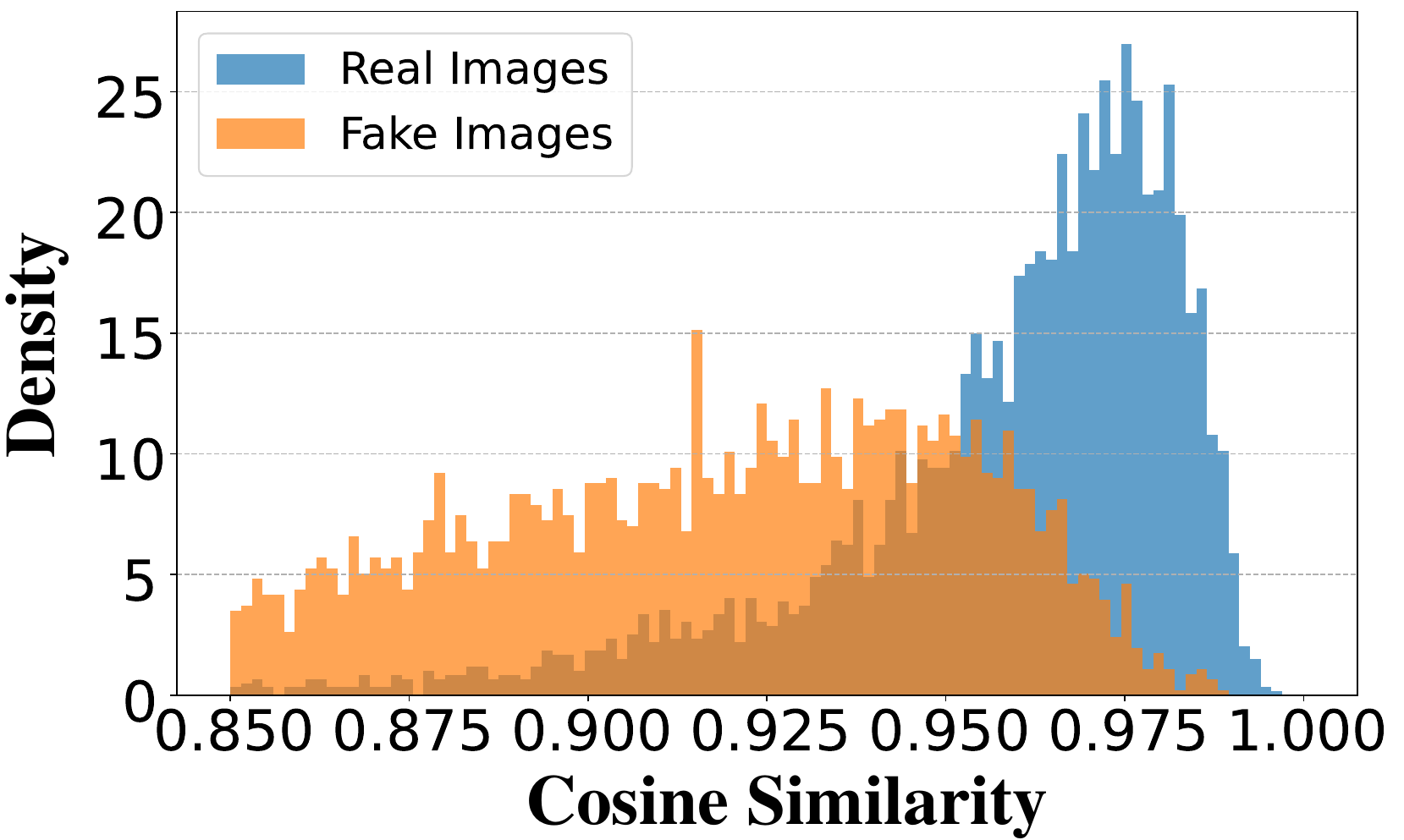} &
\includegraphics[width=0.5\columnwidth]{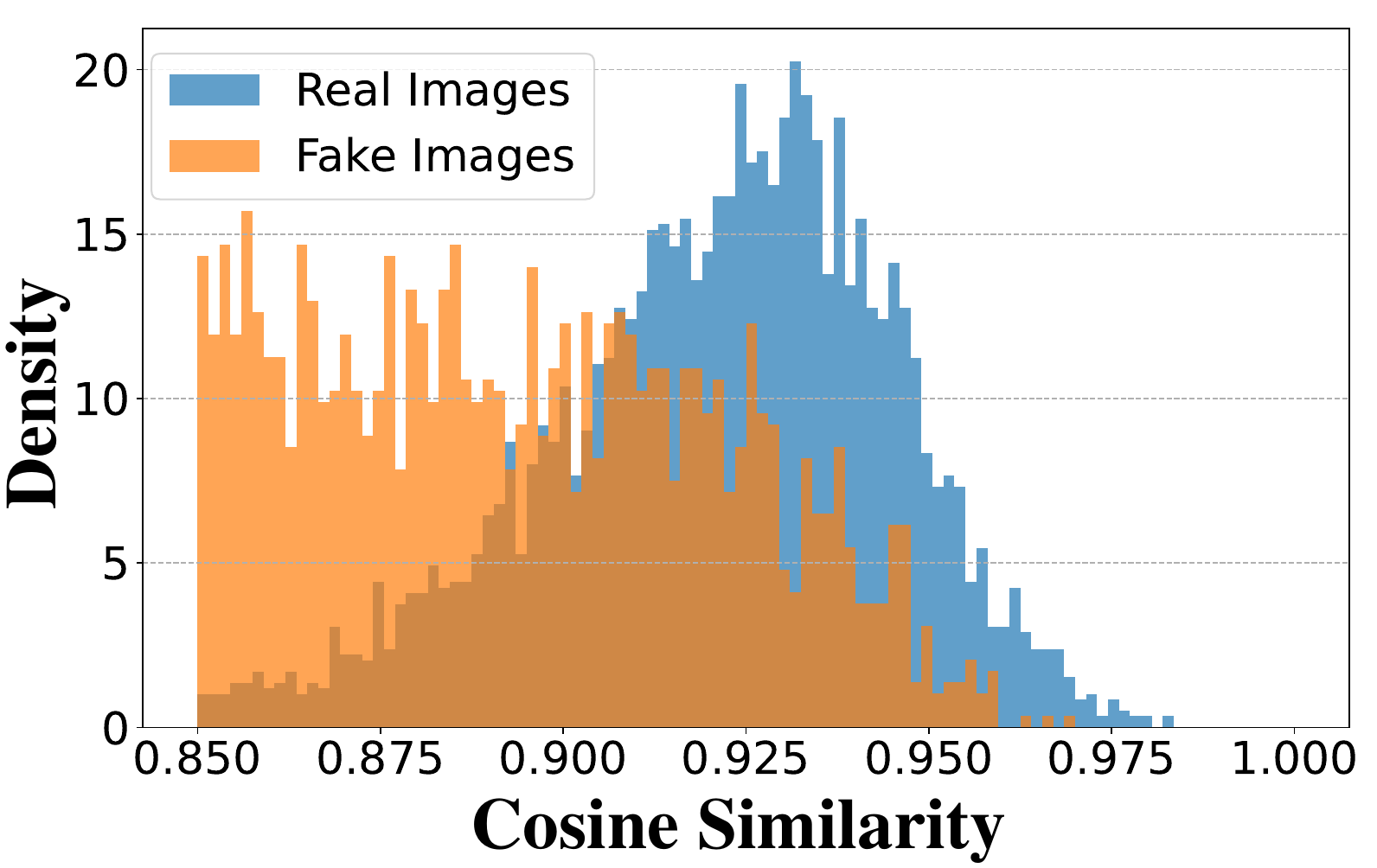} \\

(a) DINOv3 ViT-L/16 &
(b) CLIP ViT-L/14 \\

\end{tabular}
\caption{
Backbone-agnostic evidence of robustness asymmetry. We measure embedding stability as the cosine similarity between features extracted from the clean image and its perturbed counterpart. Real images maintain higher similarity than synthetic images, and the separation remains consistent for both DINOv3 and CLIP.
}
\label{fig:cosine_backbone_compare}
\vspace{-2em} 
\end{figure}

Existing generated detection approaches can be broadly grouped into two major paradigms. Representation-driven approaches leverage general-purpose embeddings from large pre-trained vision models such as CLIP~\cite{radford2021learning}, DINOv2\cite{oquab2023dinov2}, or vision–language models\cite{girdhar2023imagebindembeddingspacebind}. These detectors adapt high-level representations via linear probes, adapters, or multimodal alignment objectives~\citep{ojha2023fakedetect,tan2025c2p,yu2025unlocking}. Their strength lies in the semantic coverage and generalization ability inherited from the backbone. Artifact-driven forensic methods, in contrast, target generator-induced statistical traces including frequency anomalies\cite{frank2020leveraging}, decoding residuals\cite{le2021addfrequencyattentionmultiview}, upsampling artifacts or local pixel structural disruptions\cite{tan2024rethinking,liang2025ferretnet}. These approaches are lightweight and interpretable, excelling especially when generative pipelines introduce stable structural cues.


Although these two paradigms differ in methodology, they share a common assumption: the decision signal resides in visible appearance, either as explicit low-level artifacts or implicit high-level cues encoded in representations. As generative models reduce classical artifacts and produce distributions closer to natural images, appearance-driven cues grow increasingly subtle and fragile especially under common transformations such as JPEG compression or blur.

In this work, we explore a complementary perspective: instead of analyzing how an image \emph{looks}, we examine how it \emph{behaves} under controlled perturbations. Through extensive studies, we uncover an interesting phenomenon: natural images maintain consistent deep representations when subjected to mild perturbations. Generated images, however, display disproportionately large feature drift. We term this universal generative deficiency as \textbf{robustness asymmetry}, as shown in Fig.~\ref{fig:RFNT-assumption}. We futher quantify this effect and Fig.~\ref{fig:cosine_backbone_compare} shows that the separation persists across both DINOv3 and CLIP backbones. Moreover, we theoretically derive a lower bound for the robustness asymmetry, showing it is directly linked to the memorization behavior of generative models. This link explains why robustness asymmetry is a widespread phenomenon, as memorization is prevalent across various types of generative models \cite{carlini2021llm,carlini2023diffusion,chen2025side,chen2019ganleaks}.


Motivated by this insight, we propose Robustness Asymmetry Detection (RA-Det), a behavior-driven framework that operationalizes robustness asymmetry into an effective and generalizable detection signal. RA-Det actively probes image stability via small, semantics-preserving perturbations and integrates complementary evidence through a unified multi-branch architecture. This design captures the full spectrum of generative inconsistencies: it leverages foundation priors for semantic context, explicitly quantifies feature-space displacement via covariance-aware and vector discrepancy metrics, and recovers high-frequency artifacts through a specialized low-level residual stream. By jointly optimizing these diverse signals, RA-Det leverages Robustness Asymmetry that persists across architectures and common post-processing operations.


In short, the main contributions of our proposed framework are summarized as follows:
\begin{itemize}[left=0pt, itemsep=0.5ex, topsep=0pt]
\item We uncover a simple and universal signal--robustness asymmetry between natural and synthetic images, showing that real images remain semantically stable under perturbations, whereas synthetic images demonstrate significantly larger feature-space displacement.

\item Based on this insight, we propose RA-Det, a multi-branch detector that aggregates complementary semantic, discrepancy, and low-level residual cues to systematically probe and amplify the robustness discrepancy.

\item Following a widely adopted benchmark, our proposed RA-Det outperforms existing detection methods, delivering excellent performance in the generated detection task and fully demonstrating its superior generalization and robustness.
\end{itemize}

\section{Related Works}
\label{sec:relatedworks}

Existing detection methods can be grouped into two paradigms: representation-driven and artifact-driven.
\subsection{Representation-Driven Detection Approaches}
Methods such as UniFD~\citep{ojha2023fakedetect} classify CLIP embeddings directly, demonstrating strong generalization from the semantic and structural priors encoded in the backbone.
Subsequent works enhance this paradigm by adapting pre-trained models to better capture subtle generative inconsistencies. C2P-CLIP~\citep{tan2025c2p} introduces cross-view prompting and multi-perspective consistency, LASTED~\citep{wu2025generalizablesyntheticimagedetection} integrates textual supervision to refine multimodal representations, and LVLM-based detectors~\citep{yu2025unlocking} leverage large vision–language models for explainable detection.
These approaches benefit from broad domain coverage and the strong invariances of foundation models, enabling them to detect synthetic images beyond specific generative architectures.

\subsection{Artifact-Driven Forensic Approaches}
Frequency-based detectors~\citep{frank2020leveraging} expose anomalies in DCT or Fourier spectra, while gradient-based methods such as LGrad~\citep{tan2023learning} capture inconsistencies in local edge structures.
Recent works have revisited upsampling and decoding artifacts: NPR~\citep{tan2024rethinking} and FerretNet~\citep{liang2025ferretnet} model local pixel relationships and decoder-induced correlations. Diffusion-based reconstruction detectors such as DIRE~\citep{wang2023dirediffusiongeneratedimagedetection} and DRCT~\citep{chen2024drct} analyze reconstruction errors to reveal inconsistencies in synthetic content.
These approaches are lightweight and interpretable, but their performance often depends on the persistence of structural artifacts, which modern diffusion and rectified-flow models increasingly suppress. 

\textbf{From Appearance to Behavior}
Although representation-driven and artifact-driven methods exploit appearance-level cues, their effectiveness declines as modern generators reduce artifacts and better mimic natural statistics. We shift the focus to \textbf{how an image behaves} under structured perturbations. This behavioral view reveals a robust asymmetry between real and synthetic images and provides the foundation for our framework.

\section{Methodology}
\label{sec:methodology}
In this section, we detail RA-Det, a behavior-driven detection framework, as illustrated in Figure\textcolor{red}{~\ref{fig:RA-Det-framework}}.

\begin{figure*}[tb]
\centering
\includegraphics[width=1.0 \textwidth]{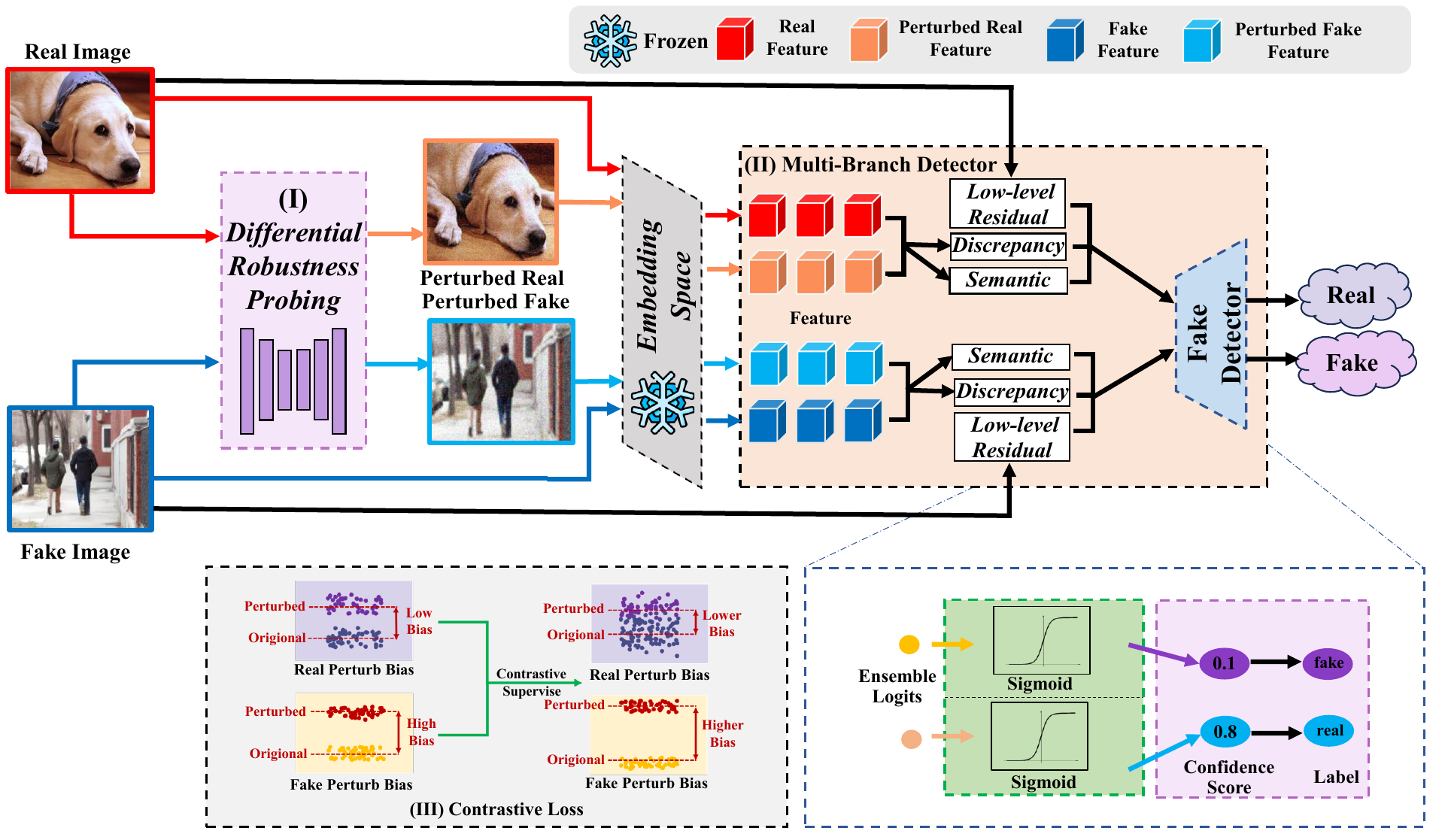}
\caption{\textbf{Overview of RA-Det.} The framework processes real and fake images through three key phases to identify generative content: \textbf{(I) Differential Robustness Probing (DRP)} generates targeted perturbations to amplify the robustness asymmetry between real and synthetic images. \textbf{(II) The Multi-Branch Detector} aggregates complementary forensic evidence by analyzing high-level semantic features, quantifying feature-space stability via the discrepancy branch, and capturing pixel-level artifacts through the low-level residual branch. \textbf{(III) Contrastive Loss} guides the training by enforcing a margin between the feature displacements of real and fake images.}
\label{fig:RA-Det-framework}
\end{figure*}

\subsection{Overview and Notation}
\label{sec:overview}
RA-Det is a behavior-driven detection framework built on robustness asymmetry in a frozen foundation feature space (Figure~\ref{fig:RA-Det-framework}). 
Given an input image $x$ with label $y\in\{0,1\}$, a frozen encoder $f(\cdot)$ extracts the clean embedding $e=f(x)$.
A learnable probing module $\mathcal{P}(\cdot)$ generates a bounded perturbation $\delta$, producing a perturbed view $x'=x+\delta$ and the corresponding embedding $e'=f(x')$.
Discrepancy features are derived from $(e,e')$, and a multi-branch detector aggregates complementary evidence from the semantic embedding, discrepancy cues, and low-level residual features to output the final prediction.

\subsection{Core components}

\subsubsection{Differential Robustness Probing} 
Motivated by the observed robustness asymmetry, we propose a novel probing component named Differential Robustness Probing (DRP) that leverages the differential robustness asymmetry for detection purposes. The primary objective of the DRP is not to attack or fool a detector, but rather to learn and apply a controlled perturbation that maximally amplifies the observable robustness discrepancy between real and fake images. 

Architecturally, DRP is implemented as a conditional UNet~\cite{ronneberger2015unetconvolutionalnetworksbiomedical} that predicts a pixel-wise perturbation map. The encoder follows a multi-stage downsampling design to extract multi-scale image features, while the clean embedding is projected into spatial feature maps to provide semantic conditioning. At the bottleneck, a cross-attention module integrates the embedding-conditioned features with the image features. The decoder mirrors the encoder with progressive upsampling and skip connections, recovering spatial details at each scale. Finally, a lightweight output head produces the perturbation map, and a \(\tanh\) activation followed by scaling with \(\epsilon\) constrains its magnitude, yielding a bounded perturbation for constructing the perturbed image.

Given an input image $ x $, we apply designed DRP module $ \mathcal{P}(\cdot) $ to obtain the perturbed counterpart $ x' $: 
\begin{equation}
            x^{\prime} = \mathcal{P}(x) = x + \delta
            \label{eq:NoiseAttack}
\end{equation}%

\subsubsection{Discrepancy Features in Foundation Space}
\label{sec:discrepancy}
Both $x$ and $x'$ are projected into the foundation space via $f(\cdot)$, yielding $e=f(x)$ and $e'=f(x')$.
RA-Det models robustness asymmetry through discrepancy features derived from $(e,e')$.

\noindent\textbf{Distance, difference, and similarity.}
We compute the embedding distance $d=\|e-e'\|_2$ and the embedding difference vector $\Delta = e-e'$.
In addition, the per-sample similarity is defined as
\begin{equation}
s = \cos(e,e').
\end{equation}

\noindent\textbf{Diagonal-restricted covariance approximation (optional).}
To quantify feature displacement with reduced complexity, we further consider a diagonal-restricted covariance approximation (DCA):
\begin{equation}
\text{DCA}(e, e') = \frac{1}{D} \sum_{j=1}^{D} \left( \Sigma_{ee'} \right)_{jj},
\label{eq:DCA}
\end{equation}
\begin{equation}
\Sigma_{ee'} = \mathbb{E}[(e - \mathbb{E}[e]) (e' - \mathbb{E}[e'])^T],
\label{eq:DCA2}
\end{equation}
where $D$ is the embedding dimension and $(\Sigma_{ee'})_{jj}$ denotes diagonal elements.
This approximation keeps diagonal terms and discards off-diagonal terms, which show negligible contribution in our ablations.

\subsubsection{Multi-branch Detector and Fusion}
\label{sec:detector}
Based on the robustness asymmetry observation, fake images differ from real images not only in high-level semantics\citep{ojha2023fakedetect}, but also in how their representations change under small, meaning-preserving perturbations, and in subtle low-level artifacts that can be hard to notice visually. RA-Det is therefore built as a multi-branch detector that gathers these complementary cues—semantic evidence, feature discrepancies, and pixel-level residual traces—and aggregates them for the final decision.

\textbf{Semantic branch:} The frozen foundation encoder extracts an embedding from the input image, and a lightweight classification head maps this embedding to a logit, providing semantic evidence for real vs. fake.

\textbf{Discrepancy branches:} Two additional branches capture how stable the embedding is under perturbation: one MLP takes the DCA distance $d=\lVert e-e'\rVert_2$ between clean and perturbed embeddings, and another MLP takes the vector difference $\Delta=e-e'$, so the detector can model both the amount and the pattern of change.

\textbf{Low-level residual branch:} Because the foundation encoder emphasizes high-level semantics and layout, RA-Det also uses a pixel-space residual stream to capture subtle synthesis traces. A median filter with a kernel size of 3 produces $\tilde{x}$, the residual map is computed as \(r=x-\tilde{x}\), and a lightweight CNN trained from scratch takes \(r\) as input to output a complementary logit. All branch logits are then aggregated to produce the final prediction.






\subsection{Training Objective}
\label{sec:objective}

RA-Det is trained with a composite objective:
\begin{equation}
\mathcal{L}_{\mathrm{comp}}=\mathcal{L}_{\mathrm{bce}}+\mathcal{L}_{\mathrm{ra}},
\end{equation}
where $\mathcal{L}_{\mathrm{bce}}$ is the standard binary cross-entropy loss, and $\mathcal{L}_{\mathrm{ra}}$ is a hinge-style contrastive loss that separates the similarity statistics of real and fake samples under perturbations.

Given a sample $x_i$ with a perturbed view $x'_i=x_i+\delta_i$, we extract embeddings $e_i=f(x_i)$ and $e'_i=f(x'_i)$, and compute the cosine similarity
\begin{equation}
s_i = \cos(e_i,e'_i).
\end{equation}
For a mini-batch $B$, we split indices into real and fake subsets,
$B_{\mathrm{real}}=\{i\in B \mid y_i=1\}$ and $B_{\mathrm{fake}}=\{i\in B \mid y_i=0\}$, and define their mean similarities as
\begin{equation}
s_{\mathrm{real}} = \mathbb{E}_{i \in B_{\mathrm{real}}}\!\left[s_i\right], \qquad
s_{\mathrm{fake}} = \mathbb{E}_{i \in B_{\mathrm{fake}}}\!\left[s_i\right].
\end{equation}
We then enforce a margin $\gamma$ between these batch-level statistics via
\begin{equation}
\mathcal{L}_{\mathrm{ra}}=\mathrm{ReLU}\!\left((s_{\mathrm{fake}}-s_{\mathrm{real}})+\gamma\right),
\end{equation}
which penalizes violations of the desired inequality $s_{\mathrm{real}}\ge s_{\mathrm{fake}}+\gamma$ and becomes zero once the margin is satisfied. Unless otherwise stated, we set $\gamma=0.1$ in all experiments.

\begin{figure*}[tb]
    \centering
    \includegraphics[width=1\linewidth]{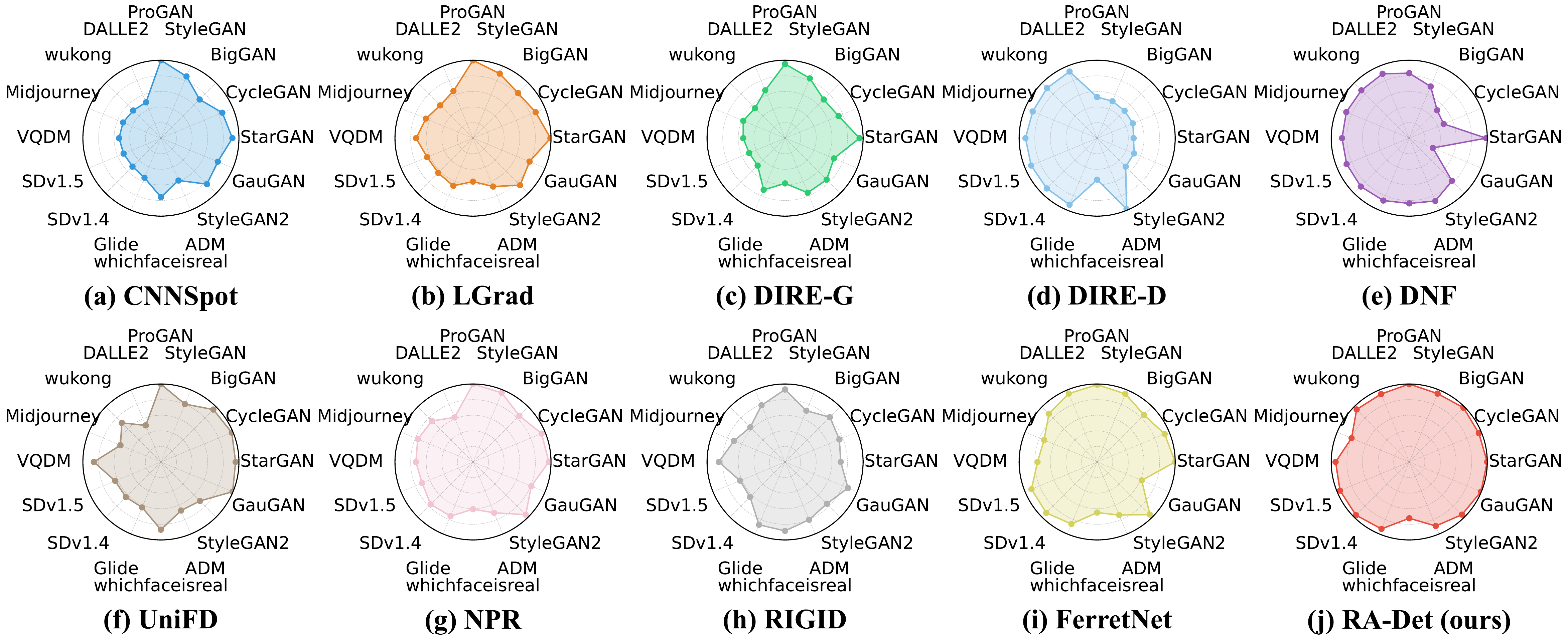}
    \caption{
    Radar chart comparison of representative detectors across 16 generators.
    We compare GAN-specific detectors, including CNNSpot~\cite{cnn-detect} and LGrad~\cite{tan2023learning}; diffusion-specific detectors, including DIRE-G and DIRE-D~\cite{wang2023dirediffusiongeneratedimagedetection} and DNF~\cite{zhang2024diffusionnoisefeatureaccurate}; and universal detectors, including UniFD~\cite{ojha2023fakedetect}, NPR~\cite{tan2024rethinking}, RIGID~\cite{he2024rigidtrainingfreemodelagnosticframework}, FerretNet~\cite{liang2025ferretnet}, and RA-Det.
    Each subplot shows the per-generator accuracy of one detector, where a larger and more balanced radar profile indicates stronger cross-generator generalization.
    }
    \vspace{-1.5em}
    \label{fig:rader}
\end{figure*}

\section{Lower-Bound for the Robustness Asymmetry}
\label{sec:theory-main}

In this section, we provide a theoretical lower bound of the robustness gap between real and fake images under our feature–shift operator. We start from the SIDE memorization divergence~\cite{chen2025side}, which quantifies the memorization behavior of a generative model (the lower the divergence, the more the generative model memorizes). Building on this, we show that the expected feature shift induced by a small, isotropic perturbation is tightly linked to the memorization of the generative models around training examples. This leads to a lower bound on the robustness difference between real and fake images.

\paragraph{Notation.}
Let $\mathcal{X}\subset\mathbb{R}^n$ be the input domain; $p$ the distribution of real images; $p_\theta$ the distribution of model-generated images; and $D=\{x_i\}_{i=1}^N\subset\mathcal{X}$ the training set.
A \emph{fixed $C^2$ encoder} is a feature map $f:\mathcal{X}\to\mathbb{R}^d$ whose parameters are held constant (no further training) and that is twice continuously differentiable with respect to the input.
Write $J_f(x)\in\mathbb{R}^{d\times n}$ for the Jacobian of $f$ at $x$ and define the Jacobian energy $G(x):=\|J_f(x)\|_F^2$. Assume $G$ is bounded: $0\le G(x)\le B$ on $\mathcal{X}$.
For a small, isotropic probe $\delta\in\mathbb{R}^n$ with $\mathbb{E}[\delta]=0$ and $\mathbb{E}[\delta\delta^\top]=(\varepsilon^2/n)I_n$, define the feature‐shift
\begin{equation}
\mathrm{Shift}_\varepsilon(x):=\mathbb{E}_\delta\!\left[\|f(x+\delta)-f(x)\|_2^2\right].
\label{eq:shift-def-main}
\end{equation}
All expectations $\mathbb{E}_{\mu}[\cdot]$ below are with respect to $x\sim\mu$ unless stated otherwise.

\paragraph{Memorization divergence from SIDE~\cite{chen2025side}.}
Fix a radius $\varepsilon_0>0$ and define the training‐neighborhood mixture
\begin{equation}
q_{\varepsilon_0}(x):=\frac{1}{N}\sum_{i=1}^N \mathcal{N}\!\left(x\,\middle|\,x_i,\varepsilon_0^2 I_n\right).
\end{equation}
The SIDE memorization divergence is
\begin{equation}
M\!\left(D;p_\theta,\varepsilon_0\right):=D_{\mathrm{KL}}\!\left(q_{\varepsilon_0}\,\|\,p_\theta\right),
\end{equation}
which is smaller when the generative model memorizes more training data.
Define
\begin{equation}
\Delta\;:=\;\mathbb{E}_{x\sim q_{\varepsilon_0}}[G(x)]\;-\;\mathbb{E}_{x\sim p}[G(x)].
\label{eq:delta-gap}
\end{equation}
Here $\Delta>0$ expresses that the encoder’s Jacobian energy is higher around training neighborhoods than under the real-data distribution.

\paragraph{Assumption}
Following established and recent theories/evidence on augmentation-induced invariance and architectural equivariance in encoders~\cite{chen2019groupaug,hounie2023automatic,oquab2023dinov2,rojasgomez2024shiftvit,xu2023egevit,caron2021dino}, we make Assumption \ref{assum:main}.

\begin{assumption}
\label{assum:main}
    The data lie (locally) on an $m$-dimensional manifold $\mathcal M\subset\mathbb R^n$ and that a fixed $C^2$ encoder $f$ exhibits reduced local sensitivity along tangent directions and comparatively larger sensitivity along normal directions.
\end{assumption}
\paragraph{Rationale.} This assumption reflects the widely used manifold view of natural images, where high-dimensional observations concentrate near a lower-dimensional submanifold of ($\mathbb{R}^n$). Data augmentation and tangent-based regularization encourage invariance along typical on-manifold transformations~\cite{hounie2023automatic}, which manifests as smaller Jacobian norms in tangent directions, while off-manifold perturbations are not similarly regularized. Architectural constraints further support this pattern: group-equivariant networks explicitly tie representation changes to symmetry actions, reducing sensitivity along symmetry-induced tangents~\cite{chen2019groupaug, rojasgomez2024shiftvit, xu2023egevit}. Finally, modern self-supervised encoders (e.g., DINO/DINOv2) empirically yield robust, transferable features consistent with augmentation-induced invariance, providing contemporary evidence for the anisotropy posited here~\cite{oquab2023dinov2, caron2021dino}. Based on this assumption, we derive the following Lemma \ref{lem:delta-positive}.

\begin{lemma}[Small-radius positive margin]
\label{lem:delta-positive}
Under the assumption above, there exists $\bar\varepsilon_0>0$ and a constant $c_0>0$ (depending only on the encoder’s anisotropy margin and local tube size) such that for all $\varepsilon_0\in(0,\bar\varepsilon_0]$,
\[
\Delta \;\ge\; c_0 \;>\; 0.
\]
\end{lemma}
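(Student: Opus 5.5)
The argument compares the average Jacobian energy $G$ under two laws: the Gaussian training-neighborhood mixture $q_{\varepsilon_0}$, which puts mass in full-dimensional balls around each $x_i$, and the real-data distribution $p$, which by Assumption~\ref{assum:main} concentrates on the $m$-dimensional manifold $\mathcal M$. The plan is to split $G$ into tangent and normal contributions and show that $q_{\varepsilon_0}$ weights the normal (high-sensitivity) part more than $p$ does. To make the assumption quantitative, fix a tube $T_r=\{x:\mathrm{dist}(x,\mathcal M)<r\}$ on which the nearest-point projection $\pi$ is $C^1$. Let $P_T(x)$ and $P_N(x)$ be the orthogonal projectors onto the tangent and normal spaces at $\pi(x)$. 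The anisotropy margin is read as the existence of constants $0\le a<b$ with
\[
\|J_f(x)P_T(x)\|_F^2\le a\,m,\qquad \|J_f(x)P_N(x)\|_F^2\ge b\,(n-m)\quad\text{on }T_r,
\]
where $a$ and $b$ are per-direction bounds on the squared sensitivity. Since $G=\|J_fP_T\|_F^2+\|J_fP_N\|_F^2$, the idea is that off-manifold mass picks up the large normal term.

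The mechanism is cleanest if $G$ is examined along an isotropic probe rather than through the bare Frobenius norm. For $x_i\in\mathcal M$ and $z\sim\mathcal N(0,\varepsilon_0^2 I_n)$, a second-order Taylor expansion (valid because $f\in C^2$ and $G\le B$) gives
\[
\mathbb E_z\|f(x_i+z)-f(x_i)\|^2=\varepsilon_0^2\,G(x_i)+O(\varepsilon_0^3).
\]
Of this, the fraction $(n-m)/n$ of the Gaussian energy lies in normal directions, where sensitivity is at least $b$. The quantity to control is then
\[
\mathbb E_{q_{\varepsilon_0}}[G]=\frac1N\sum_i \mathbb E_z\, G(x_i+z).
\]
I would expand $G(x_i+z)=G(x_i)+\nabla G(x_i)^\top z+O(|z|^2)$. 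The linear term vanishes in expectation, leaving $\mathbb E_{q_{\varepsilon_0}}[G]=\frac1N\sum_iG(x_i)+O(\varepsilon_0^2)$. The real-data term $\mathbb E_p[G]$ is an integral of $G$ over the manifold.

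This expansion exposes the main obstacle. As $\varepsilon_0\to0$, $\Delta$ tends to $\frac1N\sum_iG(x_i)-\mathbb E_p[G]$. That limit is an empirical-versus-population gap and is not automatically positive from anisotropy alone. A positive $c_0$ therefore needs an additional ingredient that the Assumption is implicitly meant to supply. The cleanest route I see is to interpret the anisotropy as holding strictly off the manifold, i.e.\ $G$ on $T_r\setminus\mathcal M$ exceeds $G$ on $\mathcal M$ by a margin that grows like the normal sensitivity. Concretely, I would assume $G(\pi(x)+v)\ge G(\pi(x))+\kappa$ for normal $v$ with $\rho\le |v|\le r$, where the constant $\kappa$ is derived from $b-a$. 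I would also use that the training points are samples from $p$, so that $\frac1N\sum_i G(x_i)$ matches $\mathbb E_p[G]$ up to an error treated as part of the local tube condition.

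Under a Gaussian at scale $\varepsilon_0$, a fixed fraction $\alpha>0$ of the mass lies in the normal shell $\{\rho\le|P_Nz|\le r\}$ for $\varepsilon_0$ chosen comparable to $\rho$. This holds because $P_Nz$ has $n-m$ degrees of freedom. The tail mass outside $T_r$ is $O(e^{-r^2/(2\varepsilon_0^2)})$ and contributes at most $B$ times that. Combining these pieces gives
\[
\Delta\ge \alpha\kappa - B e^{-r^2/2\varepsilon_0^2} - (\text{sampling/Taylor error}).
\]
Choosing $\bar\varepsilon_0$ small relative to $r$, while keeping the shell fraction bounded below, yields $\Delta\ge c_0:=\alpha\kappa/2$, with $c_0$ depending only on the anisotropy margin and the tube size. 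The hardest step is making the shell-mass lower bound $\alpha$ uniform over $\varepsilon_0\in(0,\bar\varepsilon_0]$ when the margin is only available at distance at least $\rho$. If that fails, I would replace the fixed-shell margin with a margin that scales linearly in normal distance. The resulting bound would then have the form $\Delta\gtrsim \varepsilon_0$ rather than a constant, and the statement would be weakened accordingly.
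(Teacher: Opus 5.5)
Your diagnosis in the middle of the proposal is correct, and it is sharper than the paper's own argument. Because $q_{\varepsilon_0}$ converges weakly to the empirical measure on $\{x_i\}$ and $G$ is continuous and bounded, $\Delta\to\frac1N\sum_iG(x_i)-\mathbb E_p[G]$ as $\varepsilon_0\to0$. So a margin $c_0$ that is uniform over $(0,\bar\varepsilon_0]$ cannot come from anisotropy alone, and the paper does not obtain one either. It Taylor-expands $G$ itself to \emph{second} order at an anchor $y\in\mathcal M$ and splits the isotropic noise into tangent and normal parts. It reads the Assumption as a gap in the Hessian $H_G$ between normal and tangent directions, $\lambda_\perp^{\min}>\lambda_T^{\max}$. (For a lower bound, the tangent curvature really has to be controlled in absolute value.) Since a fraction $(n-m)/n$ of the noise energy is normal, this gives $\mathbb E[G(y+\eta)]-G(y)\ge c_1\varepsilon_0^2-C'\varepsilon_0^3$. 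The paper then averages over anchors while tacitly identifying $\frac1N\sum_iG(x_i)$ with $\mathbb E_p[G]$, which is the very step you flagged. It ends with $c_0:=\tfrac12c_1\varepsilon_0^2$, which depends on $\varepsilon_0$. What is actually proved is a strictly positive margin of order $\varepsilon_0^2$ for each small $\varepsilon_0$.

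Measured against that, your proposal still has a genuine gap, in three places.

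\emph{The reading of the Assumption.} You impose $\|J_fP_T\|_F^2\le am$ and $\|J_fP_N\|_F^2\ge b(n-m)$ throughout the tube. This constrains $G$ identically on and off $\mathcal M$, so it cannot produce any difference between $\mathbb E_{q_{\varepsilon_0}}[G]$ and $\mathbb E_p[G]$. The detour through $\mathbb E_z\|f(x_i+z)-f(x_i)\|^2$ concerns the feature shift, not $\Delta$. The missing idea is to place the anisotropy on how $G$ varies normal to $\mathcal M$. That is the second-order term of $G$ you absorbed into $O(\varepsilon_0^2)$, and it is exactly where the normal mass of the Gaussian enters.

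\emph{The shell route.} This cannot close. For fixed $\rho>0$ the Gaussian mass of $\{|P_Nz|\ge\rho\}$ vanishes as $\varepsilon_0\to0$. More decisively, your own limit shows that once $\frac1N\sum_iG(x_i)$ is matched to $\mathbb E_p[G]$, no uniform $c_0>0$ exists, so the step you call hardest does fail. The tail bound $e^{-r^2/2\varepsilon_0^2}$ is also off in high dimension, since $|P_Nz|$ concentrates near $\varepsilon_0\sqrt{n-m}$.

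\emph{The linear fallback.} A margin linear in normal distance is incompatible with smoothness. $G=\|J_f\|_F^2$ is $C^1$ for $f\in C^2$. If $G(y+t\nu)-G(y)\ge\kappa'|t|$ held for a unit normal $\nu$ and all small $t$ of both signs, the $t$-derivative at $0$ would have to be both $\ge\kappa'$ and $\le-\kappa'$. The smooth version is quadratic growth in the normal directions, i.e.\ the paper's Hessian condition, and it yields $\Delta\gtrsim\varepsilon_0^2$ rather than $\varepsilon_0$.
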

We prove Lemma \ref{lem:delta-positive} to ground the detector’s core premise in theory: modern encoders are trained/built to be insensitive along natural, on-manifold transformations but more sensitive off-manifold; thus a small Gaussian “tube” around training samples puts probability mass where the encoder’s Jacobian energy is higher, making $\mathbb{E}_{q_{\varepsilon_0}}[G]>\mathbb{E}_{p}[G]$ and yielding a strictly positive margin ($\Delta>0$) that drives the robustness gap used in Theorem 1. 

\begin{theorem}[Lower bound on the shift gap]
\label{thm:shift-gap}
Under the setup and Lemma \ref{lem:delta-positive} above, for sufficiently small $\varepsilon>0$,
\begin{equation}
\begin{aligned}
\mathbb{E}_{x\sim p_\theta}\!\big[\mathrm{Shift}_\varepsilon(x)\big]
\;-\;
\mathbb{E}_{x\sim p}\!\big[\mathrm{Shift}_\varepsilon(x)\big]
\;\ge\;\\
\frac{\varepsilon^2}{n}\Big(\Delta\;-\;B\,\sqrt{\,M\!\left(D;p_\theta,\varepsilon_0\right)/2\,}\Big)
\;+\;O(\varepsilon^4).
\end{aligned}
\label{eq:main-gap}
\end{equation}
\end{theorem}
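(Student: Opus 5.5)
The plan has three steps: Taylor-expand the feature shift to leading order, swap $p_\theta$ for the training-neighborhood mixture $q_{\varepsilon_0}$ at a cost controlled by Pinsker's inequality, and then read off the gap $\Delta$.

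\textbf{Step 1: second-order expansion of the shift.} Since $f$ is $C^2$, for each $x$ we write
\[
f(x+\delta)-f(x)=J_f(x)\delta+R(x,\delta), \qquad \|R(x,\delta)\|\le \tfrac12 L\|\delta\|^2,
\]
where $L$ bounds the Hessian of $f$ on a neighborhood of $\mathcal{X}$. Expanding the square and taking $\mathbb{E}_\delta$, the leading term is
\[
\mathbb{E}_\delta\big[\delta^\top J_f^\top J_f\,\delta\big]=\mathrm{tr}\big(J_f^\top J_f\,(\varepsilon^2/n)I_n\big)=\frac{\varepsilon^2}{n}G(x).
\]
The cross term $2\,\mathbb{E}_\delta[\langle J_f\delta,R\rangle]$ involves third moments of $\delta$. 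It vanishes if the probe is symmetric (for instance Gaussian); otherwise it is $O(\varepsilon^3)$. To obtain the stated $O(\varepsilon^4)$, I would add the mild assumption that $\delta$ is symmetric, or that $f$ has a bounded third derivative so that the odd-moment contribution cancels. The term $\mathbb{E}\|R\|^2$ is $O(\varepsilon^4)$ provided $\mathbb{E}\|\delta\|^4=O(\varepsilon^4)$. This gives, uniformly in $x$,
\[
\mathrm{Shift}_\varepsilon(x)=\frac{\varepsilon^2}{n}G(x)+O(\varepsilon^4).
\]
Integrating against $p_\theta$ and against $p$ and subtracting, the gap equals $\frac{\varepsilon^2}{n}\big(\mathbb{E}_{p_\theta}G-\mathbb{E}_pG\big)+O(\varepsilon^4)$.

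\textbf{Step 2: replace $p_\theta$ by $q_{\varepsilon_0}$.} Because $0\le G\le B$, the variational form of total variation gives
\[
\big|\mathbb{E}_{p_\theta}G-\mathbb{E}_{q_{\varepsilon_0}}G\big|\le B\,\mathrm{TV}(q_{\varepsilon_0},p_\theta).
\]
In fact, since the range of $G$ has length $B$, the sharper bound $B\cdot\mathrm{TV}$ with $\mathrm{TV}=\sup_A|P(A)-Q(A)|$ holds. Pinsker's inequality then gives
\[
\mathrm{TV}(q_{\varepsilon_0},p_\theta)\le\sqrt{D_{\mathrm{KL}}(q_{\varepsilon_0}\|p_\theta)/2}=\sqrt{M(D;p_\theta,\varepsilon_0)/2}.
\]
Hence $\mathbb{E}_{p_\theta}G\ge \mathbb{E}_{q_{\varepsilon_0}}G-B\sqrt{M/2}$, and therefore $\mathbb{E}_{p_\theta}G-\mathbb{E}_pG\ge \Delta-B\sqrt{M/2}$ by the definition of $\Delta$ in \eqref{eq:delta-gap}. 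Substituting this into Step 1 yields \eqref{eq:main-gap}. Lemma~\ref{lem:delta-positive} is used only to note that the bound is informative: when memorization is strong, $M$ is small, and $\Delta\ge c_0>0$ makes the right-hand side positive.

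\textbf{Main obstacle: uniformity of the remainder.} I expect the main difficulty to be making the $O(\varepsilon^4)$ remainder uniform over $x$ under $p$ and $p_\theta$. The supports of these distributions, and of $q_{\varepsilon_0}$, may extend beyond $\mathcal{X}$, and $x+\delta$ may leave $\mathcal{X}$. I would handle this by assuming bounded second (and third) derivatives of $f$ on a neighborhood of $\mathcal{X}$, together with a probe having bounded fourth moments, for example Gaussian or uniform on a sphere of radius $\varepsilon$. The odd-moment cross term is the other point to watch. If symmetry of $\delta$ is not available, the honest remainder is $O(\varepsilon^3)$, and I would state the result with that caveat.
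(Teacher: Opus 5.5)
Your proof is correct and follows the paper's outline: your Step~1 is its step (i), and your final substitution is its step (iii). The one real difference is the change-of-measure step. The paper lower-bounds $M$ by the Donsker--Varadhan formula, $M\ge\lambda\,\mathbb{E}_{q_{\varepsilon_0}}[G]-\log\mathbb{E}_{p_\theta}[e^{\lambda G}]$. It then bounds the log-moment-generating function with Hoeffding's lemma, $\log\mathbb{E}_{p_\theta}[e^{\lambda G}]\le\lambda\,\mathbb{E}_{p_\theta}[G]+\lambda^2B^2/8$, and optimizes the resulting quadratic in $\lambda$. You instead use total-variation duality for $[0,B]$-valued functions followed by Pinsker.

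Both routes give exactly $B\sqrt{M/2}$, and this is no coincidence: applying Donsker--Varadhan plus Hoeffding to indicator functions is one standard proof of Pinsker, so the two are equivalent in strength here. Yours is shorter and rests on a single textbook inequality. The paper's route only needs a moment-generating-function bound for $G$ under $p_\theta$. Its change-of-measure step therefore survives verbatim, giving $\sigma\sqrt{2M}$ in place of $B\sqrt{M/2}$, if $G$ is merely $\sigma^2$-sub-Gaussian under $p_\theta$ rather than uniformly bounded by $B$.

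On Step~1 you are more careful than the paper. It simply asserts $|R_\varepsilon(x)|\le C_H\varepsilon^4$ ``by Taylor and isotropy (odd moments vanish)'', which silently assumes a symmetric probe and enough smoothness. One correction to your wording there: symmetry of $\delta$ and a bounded third derivative (or Lipschitz Hessian) are needed \emph{jointly}, not as alternatives. Also, the cross term $2\,\mathbb{E}_\delta\langle J_f\delta,R\rangle$ does not itself vanish for a symmetric probe; symmetry kills only its cubic part $\mathbb{E}_\delta\langle J_f\delta,\tfrac12 D^2f(x)[\delta,\delta]\rangle$. The third-order bound is what makes the leftover of order $\mathbb{E}\|\delta\|^4=O(\varepsilon^4)$. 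With only a bounded Hessian, symmetry alone does not give $O(\varepsilon^4)$. Your closing paragraph assumes both conditions, so the argument stands.
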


More specifically, Theorem~\ref{thm:shift-gap} shows that the expected shift under $p_\theta$ is lower-bounded by a term that decreases with the SIDE divergence $M(D;p_\theta,\varepsilon_0)$. In other words, more memorization of the training data provably creates a larger robustness asymmetry in the small-noise regime. Prior works on extraction and membership inference in different generative models demonstrate that such memorization is prevalent across GANs, diffusion models, and even the large language models~\cite{hayes2017logan,chen2019ganleaks,hilprecht2019mia,carlini2023diffusion,carlini2021llm}. This explains why the robustness asymmetry is \emph{universal} because memorization is prevalent across a wide range of generative models types. The proof and empirical validation of the lower bound are in \autoref{sec:thm_proof}.






\begin{table*}[]
  \centering
  \small
  \setlength{\tabcolsep}{3pt}
  \caption{The detection performance comparison between our approach and baselines. We use ACC(\%)/AP(\%) as the evaluation metrics. DIRE-D denotes the DIRE detector trained over fake images from ADM, while DNF uses a model trained on ADM, similar to DIRE-D. DIRE-G denotes the DIRE detector trained over the same training set (ProGAN) as others. Among all detectors, the best result and the second-best result are denoted in boldface and underlined, respectively.}
  \label{tab:baselines_results_combined}
  \resizebox{0.99\textwidth}{!}{%
  \begin{tabular}{@{}l*{3}{c}*{2}{c}*{7}{c}@{}}
    \toprule
    \multirow{2}{*}{Generator} & \multicolumn{3}{c}{GAN Detector} & \multicolumn{2}{c}{Diffusion Detector} & \multicolumn{6}{c}{Universal Detector} \\
    \cmidrule(lr){2-4} \cmidrule(lr){5-6} \cmidrule(lr){7-13}
    & FreDect & LGrad & DIRE-G & DIRE-D & DNF & CNNSpot & GramNet & UniFD & NPR & RIGID & FerretNet & RA-Det (ours) \\
    \midrule
    ProGAN
    & 99.36/\underline{99.99} & 99.78/\underline{99.99} & 95.19/99.08 & 52.75/58.79 & 83.35/60.40
    & \textbf{99.99}/\textbf{100.00} & \underline{99.99}/\textbf{100.00} & 99.81/\textbf{100.00} & 99.90/99.98  & 93.00/96.83
    & 99.09/99.95
    & \underline{99.98}/\textbf{100.00} \\

    StyleGAN
    & 78.02/88.98 & 89.63/98.03 & 83.03/91.74 & 51.31/56.68 & 71.76/44.18
    & 85.71/\underline{99.54} & 83.59/94.49 & 80.40/97.48 & \textbf{96.06}/\textbf{99.78} & 71.19/70.61
    & 94.53/98.63
    & \underline{94.98}/98.90 \\

    BigGAN
    & 81.98/93.62 & 81.73/89.08 & 70.12/75.25 & 49.70/46.91 & 50.55/51.75
    & 70.19/84.51 & 67.33/81.79 & \underline{95.08}/\underline{99.27} 
    & 83.95/85.59 & 81.25/85.69
    & 85.18/91.06
    & \textbf{98.35}/\textbf{99.83} \\

    CycleGAN
    & 78.77/84.78 & 86.94/95.01 & 74.19/80.56 & 49.58/50.03 & 47.77/66.49
    & 85.20/93.48 & 86.07/95.33 & \textbf{98.33}/\textbf{99.80} 
    & 95.19/98.12 & 75.34/83.04 
    & 93.64/98.34
    & \underline{96.52}/\underline{99.47} \\

    StarGAN
    & 94.62/\underline{99.49} & 99.27/\textbf{100.00} & 95.47/99.34 & 46.72/40.64 & 98.55/86.44
    & 91.62/98.15 & 95.05/99.23 & 95.75/99.37 
    & 97.17/\textbf{100.00} & 71.34/77.64
    & \textbf{99.97}/\textbf{100.00}
    & \underline{99.92}/\textbf{100.00} \\

    GauGAN
    & 80.56/82.84 & 78.46/95.43 & 67.79/72.15 & 51.23/47.34 & 32.78/39.48
    & 78.93/89.49 & 69.35/84.99 & \underline{99.47}/\textbf{99.98} 
    & 80.94/82.97 & 87.26/93.71
    & 61.84/59.71
    & \textbf{99.51}/\underline{99.77} \\

    StyleGAN2
    & 66.19/82.54 & 85.32/97.89 & 75.31/88.29 & 51.72/58.03 & 77.42/40.16
    & 83.39/99.05 & 87.28/99.11 & 70.76/97.71 
    & \textbf{95.61}/\textbf{99.95} & 75.78/80.78
    & \underline{95.56}/\underline{99.16}
    & 95.54/99.15 \\

    whichfaceisreal
    & 50.75/55.85 & 55.70/57.99 & 58.05/60.13 & 53.30/59.02 & 83.70/52.64
    & 75.65/83.11 & 72.70/94.22 & \underline{86.90}/\textbf{96.73} 
    & 60.60/62.91 & \textbf{88.40}/\underline{95.06}
    & 64.95/70.36
    & 72.20/81.13 \\

    \midrule
    ADM
    & 63.40/61.77 & 67.15/72.95 & 75.78/85.84 & \textbf{98.25}/\textbf{99.79} & 87.22/83.72
    & 58.78/71.07 & 58.61/73.11 & 67.46/89.80 
    & 70.58/75.08 & 80.22/85.88
    & 73.65/83.23
    & \underline{88.89}/\underline{95.07} \\

    Glide
    & 54.13/52.91 & 66.11/80.42 & 71.75/78.35 & \underline{92.42}/\textbf{99.54} & 86.62/95.18
    & 55.00/66.16 & 54.50/66.76 & 63.09/83.81 
    & 75.18/83.24 & 87.26/93.89
    & 86.12/93.62
    & \textbf{92.94}/\underline{97.83} \\

    SDv1.4
    & 38.79/37.83 & 63.02/62.37 & 49.74/49.87 & 91.24/\underline{98.61} & 87.77/79.22
    & 51.55/56.88 & 51.70/59.83 & 63.66/86.14 
    & 76.81/82.79 & 63.57/65.59
    & \underline{92.33}/97.20
    & \textbf{96.50}/\textbf{99.08} \\

    SDv1.5
    & 39.21/37.76 & 63.67/62.85 & 49.83/49.52 & \underline{91.63}/\underline{98.83} & 86.86/77.54
    & 51.78/57.24 & 52.16/60.37 & 63.49/85.84 
    & 70.58/83.80 & 62.50/65.45
    & 91.13/96.64
    & \textbf{96.01}/\textbf{98.92} \\

    VQDM
    & 77.80/85.10 & 72.99/77.47 & 53.68/54.57 & \underline{91.90}/\textbf{98.98} & 86.15/77.14
    & 53.67/61.92 & 52.86/61.13 & 86.01/96.53 
    & 73.22/77.74 & 85.01/90.93
    & 76.44/85.23
    & \textbf{94.43}/\underline{98.68} \\

    \midrule
    Midjourney
    & 45.87/46.09 & 65.35/71.86 & 58.01/61.86 & \textbf{89.45}/\textbf{97.32} & 87.32/68.09
    & 52.59/55.90 & 50.02/56.82 & 56.13/74.00 
    & 76.61/82.61 & 70.80/78.63
    & 73.66/80.27
    & \underline{80.04}/\underline{87.53} \\

    wukong
    & 40.30/39.58 & 59.55/62.48 & 54.46/55.38 & \underline{90.90}/\textbf{98.37} & 86.80/87.32
    & 50.23/52.85 & 50.76/55.62 & 70.93/91.07 
    & 74.45/78.17 & 63.09/67.10
    & 87.48/94.72
    & \textbf{95.33}/\underline{98.33} \\

    DALLE2
    & 34.67/38.22 & 65.45/82.55 & 66.48/74.48 & 92.45/\textbf{99.71} & 89.39/89.35
    & 49.82/50.59 & 49.25/49.82 & 50.75/63.04 
    & 61.88/71.64 & 78.74/87.28
    & \textbf{94.95}/\underline{98.82}
    & \underline{94.30}/98.14 \\

    \midrule
    Average
    & 64.03/68.28 & 75.34/84.35 & 68.68/74.13 & 71.53/75.87 & 77.75/68.69
    & 68.38/76.39 & 67.58/76.93 & 78.00/ 89.73 
    & 80.55/84.05 & 77.17/82.38
    & \underline{85.66}/\underline{90.43}
    & \textbf{93.47}/\textbf{97.00} \\
    \bottomrule
  \end{tabular}%
  }
\end{table*}

\section{Experiments}
\subsection{Experimental Setup}

\subsubsection{Datasets} 
To better simulate real-world black-box detection scenarios, we follow the protocol established by~\citeauthor{cnn-detect} restricting training to one generative model while evaluating generalization on unseen models. Specifically, during the training phase, we use only 360k real images and 360k synthetic images generated by ProGAN~\citep{karras2017progressive}. The evaluation phase, however, is conducted on ProGAN and 15 unseen generative models or platforms provided by~\citet{zhong2024patchcraft}. The test set includes both facial images and other semantic categories, covering generative models and platforms such as GAN-based models, diffusion-based models, and platform-generated content, ensuring a robust and diverse assessment of the model's generalization ability.

In addition to the established benchmark generators, we further evaluate RA-Det on more recent synthesis paradigms, including recent diffusion and platform generators such as SDXL~\citep{podell2024sdxl}, FLUX.1~\citep{flux2024}, and Midjourney v6~\cite{midjourney2024v61}, as well as autoregressive generators such as Infinity~\citep{han2025infinity} and VAR~\citep{tian2024visual}.
The full results are reported in Appendix~\ref{app:training_source}.

\subsubsection{Implementation Details}
RA-Det is implemented in PyTorch using DINOv3 ViT-L/16 as the fixed backbone. All images are resized to $256\times256$ and randomly cropped to $224\times224$ during training, while center cropping is applied at testing. We train the perturbation generator and detection head for 10 epochs using the Adam optimizer with an initial learning rate of $1\times10^{-5}$ and a batch size of 32. JPEG compression and blur augmentations are disabled to ensure that robustness asymmetry arises solely from the probing module. All experiments are conducted on eight NVIDIA H200 GPUs.

\subsubsection{Compared Methods}
We classify existing methods into three categories: \textbf{GAN-specific detectors}, such as FreDect~\cite{frank2020leveraging} and LGrad~\cite{tan2023learning}; \textbf{Diffusion-specific detectors}, including DIRE~\cite{wang2023dirediffusiongeneratedimagedetection} and DNF~\cite{zhang2024diffusionnoisefeatureaccurate}; and \textbf{Universal detectors}, like CNNSpot~\cite{cnn-detect}, GramNet~\cite{liu2020globaltextureenhancementfake}, UniFD~\cite{ojha2023fakedetect}, NPR~\cite{tan2024rethinking}, RIGID~\cite{he2024rigidtrainingfreemodelagnosticframework} and FerretNet~\cite{liang2025ferretnet}.

\subsubsection{Evaluation Metrics }
Following the protocol established by PatchCraft~\cite{zhong2024patchcraft}, we adopt Accuracy (ACC) and Average Precision (AP), two widely used metrics in generative image detection~\cite{cnn-detect,ojha2023fakedetect,tan2024rethinking}. These metrics provide a comprehensive evaluation of detection performance. We report both values for each method across all datasets and compute the mean performance to assess generalization.

\subsection{Comparison of Detection Performance}

We evaluate RA-Det under a comprehensive cross-generator setting, covering 16 diverse generative models and more than 10 representative detectors, including GAN-specific, diffusion-specific, and universal approaches. As reported in Table~\ref{tab:baselines_results_combined}, RA-Det achieves the best overall performance, reaching an average accuracy of \textbf{93.47\%} and an average AP of \textbf{97.00\%}. This result surpasses the strongest universal baseline, FerretNet (85.66\% / 90.43\%), by a substantial margin, demonstrating a clear advantage in both accuracy and ranking quality.
Fig.~\ref{fig:rader} provides a complementary per generator view of the same comparison.
RA-Det forms a larger and more balanced radar profile, indicating that its advantage does not come from only a few generators but from more stable cross-generator performance.

In contrast, existing detectors exhibit clear limitations in generalization. Appearance-driven or architecture-specific methods often perform well only within their target domains, such as GAN-based or diffusion-based generators, but degrade noticeably when evaluated on unseen or mismatched models. Universal detectors, including CNNSpot, GramNet, and UniFD, improve cross-model robustness to some extent, yet their reliance on static appearance cues or representation similarity restricts stability under diverse generative processes. NPR and FerretNet, which exploit local pixel dependencies introduced by upsampling operations, achieve competitive performance on several generators, supporting the presence of structured artifacts in synthetic images. However, their effectiveness remains uneven across models, particularly for more challenging or stylistically diverse generators.

Compared with RIGID, a behavior-based baseline that also relies on a robustness gap, RA-Det models robustness asymmetry through controlled perturbations and learns a decision function over the resulting discrepancy features. RA-Det improves the average ACC/AP from 77.17\% / 82.38\% for RIGID to 93.47\% / 97.00\%, showing that learned discrepancy modeling is substantially more effective than threshold-based robustness scoring. 

\begin{table}[tb]
\centering
\small
\setlength{\tabcolsep}{6pt}
\caption{
Average performance under different training-source settings.
Setting I, Setting II, and Setting III correspond to ProGAN-only, SDv1.4-only, and mixed ProGAN+SDv1.4 training, respectively.
}
\label{tab:ra_generality_main}
\begin{tabular}{lccc}
\toprule
\textbf{Setting} & \textbf{Training source} & \textbf{Acc (\%)} & \textbf{AP (\%)} \\
\midrule
Setting I & ProGAN & 93.47 & 97.00 \\
Setting II & SDv1.4 & 92.70 & 96.99 \\
Setting III & ProGAN + SDv1.4 & 93.33 & 96.69 \\
\bottomrule
\end{tabular}
\end{table}

\subsection{Generality of Robustness Asymmetry}
\label{sec:ra_generality}
The main experiments follow the standard single generator training protocol, where RA-Det is trained with ProGAN fake images.
To examine whether robustness asymmetry is a general property of generated images rather than a cue tied to one training generator, we further evaluate RA-Det under three training settings.
Setting I uses ProGAN as the fake training source, Setting II uses SDv1.4, and Setting III uses both ProGAN and SDv1.4.
As shown in Table~\ref{tab:ra_generality_main}, all three settings achieve comparable average performance.
This indicates that the learned robustness asymmetry cue remains stable across different training sources and is not specific to ProGAN or GAN-based synthesis.
The detailed dataset composition, training-source statistics, and complete results for each individual generator are provided in Appendix~\ref{app:training_source}.

\subsection{Ablation Study}
\label{sec:ablation}

We conduct ablation experiments to quantify the contribution of the frozen foundation backbone, the robustness-aware training objective, the probing strategy, the contrastive margin, and each branch in the multi-branch detector of RA-Det.
Table~\ref{tab:ablation_overall} summarizes all variants.
For readability, \textbf{Sem/Dis/Res} denote the semantic branch, discrepancy branch, and low-level residual branch, respectively, where the discrepancy branch is instantiated by the embedding distance $d$ and difference vector $\Delta$ defined in Sec.~\ref{sec:detector}.

\noindent\textbf{Backbone.}
To verify that RA-Det is not tied to a specific representation space, we replace the frozen backbone while keeping the detector design and training objective unchanged.
As shown in Table~\ref{tab:ablation_overall}, RA-Det achieves strong performance with both DINOv3 ViT-L/16 and CLIP ViT-L/14.
This indicates that the robustness asymmetry cue is not restricted to a single representation space.
The stronger result with DINOv3 further shows that a more discriminative visual representation can better amplify the advantage of our robustness asymmetry modeling.

\noindent\textbf{Training objective.}
We study the role of the robustness-aware objective by training RA-Det with only the binary classification loss.
Without $\mathcal{L}_{\mathrm{ra}}$, the detector still receives supervision from real/fake labels, but the probing module is no longer explicitly encouraged to separate real and generated images by their representation stability.
As a result, the model can rely more on static classification cues and less on the behavioral discrepancy revealed by perturbation.
The performance drop in Table~\ref{tab:ablation_overall} shows that $\mathcal{L}_{\mathrm{ra}}$ helps align DRP with the core robustness asymmetry signal, making the learned discrepancy features more discriminative.

\begin{table}[tb]
\centering
\small
\setlength{\tabcolsep}{4pt}
\caption{
Ablation on backbone, training objective, probing strategy, contrastive margin, and the multi-branch detector in Sec.~\ref{sec:detector}. 
\textbf{Sem/Dis/Res} denote the semantic branch, discrepancy branch, and low-level residual branch, respectively. 
The full model used in our main experiments is reported at the top, with the best results shown in bold.
}
\label{tab:ablation_overall}
\resizebox{\linewidth}{!}{
\begin{tabular}{lcccc}
\toprule
\textbf{Variant} &
\textbf{Backbone} &
\textbf{Loss} &
\textbf{Acc (\%)} &
\textbf{AP (\%)} \\
\midrule
\multicolumn{5}{l}{\textit{Full model used in the main experiments}}\\
\textbf{Full RA-Det (Ours)} &
DINOv3 ViT-L/16 &
$\mathbf{\mathcal{L}_{\mathrm{comp}}}$ &
\textbf{93.47} &
\textbf{97.00} \\

\midrule
\multicolumn{5}{l}{\textit{Backbone / Objective}}\\
Backbone $\rightarrow$ CLIP &
CLIP ViT-L/14 &
$\mathcal{L}_{\mathrm{comp}}$ &
91.85 &
95.00 \\

Loss $\rightarrow$ BCE &
DINOv3 ViT-L/16 &
$\mathcal{L}_{\mathrm{bce}}$ &
90.86 &
95.26 \\

\midrule
\multicolumn{5}{l}{\textit{Probing strategy}}\\
DRP $\rightarrow$ Gaussian noise &
DINOv3 ViT-L/16 &
$\mathcal{L}_{\mathrm{comp}}$ &
86.11 &
91.71 \\

DRP $\rightarrow$ Gaussian blur &
DINOv3 ViT-L/16 &
$\mathcal{L}_{\mathrm{comp}}$ &
87.07 &
91.42 \\

DRP $\rightarrow$ Resize residual &
DINOv3 ViT-L/16 &
$\mathcal{L}_{\mathrm{comp}}$ &
86.59 &
90.83 \\

\midrule
\multicolumn{5}{l}{\textit{Contrastive margin}}\\
$\gamma=0.1$ (default) &
DINOv3 ViT-L/16 &
$\mathcal{L}_{\mathrm{comp}}$ &
93.47 &
97.00 \\

$\gamma=0.25$ &
DINOv3 ViT-L/16 &
$\mathcal{L}_{\mathrm{comp}}$ &
90.96 &
95.07 \\

$\gamma=0.5$ &
DINOv3 ViT-L/16 &
$\mathcal{L}_{\mathrm{comp}}$ &
92.67 &
96.23 \\

$\gamma=1.0$ &
DINOv3 ViT-L/16 &
$\mathcal{L}_{\mathrm{comp}}$ &
93.47 &
97.02 \\

$\gamma=2.0$ &
DINOv3 ViT-L/16 &
$\mathcal{L}_{\mathrm{comp}}$ &
92.31 &
96.16 \\

\midrule
\multicolumn{5}{l}{\textit{Multi-branch detector}}\\
w/o Sem &
DINOv3 ViT-L/16 &
$\mathcal{L}_{\mathrm{comp}}$ &
90.34 &
95.43 \\

w/o Dis &
DINOv3 ViT-L/16 &
$\mathcal{L}_{\mathrm{comp}}$ &
85.73 &
91.52 \\

w/o Res &
DINOv3 ViT-L/16 &
$\mathcal{L}_{\mathrm{comp}}$ &
88.62 &
94.26 \\

Res: Med $\rightarrow$ Gau &
DINOv3 ViT-L/16 &
$\mathcal{L}_{\mathrm{comp}}$ &
90.58 &
95.63 \\

\bottomrule
\end{tabular}
}
\end{table}

\noindent\textbf{Margin sensitivity.}
We further evaluate the contrastive margin $\gamma$ in $\mathcal{L}_{\mathrm{ra}}$.
As shown in Table~\ref{tab:ablation_overall}, RA-Det remains stable across a broad range of margin values.
The default setting $\gamma=0.1$ is competitive with the best result, suggesting that the method is not sensitive to this hyperparameter.

\noindent\textbf{Probing strategy.}
We further replace DRP with several simple perturbation operators, including Gaussian noise, Gaussian blur, and resize residuals computed from downsampling and upsampling differences.
As shown in Table~\ref{tab:ablation_overall}, all hand-crafted perturbations perform clearly worse than the learned DRP.
This suggests that generic perturbations can reveal limited robustness differences, but they are insufficient to fully expose the robustness asymmetry cue.
Thus, the learned probing module is important for amplifying the behavioral gap between real and generated images.

\noindent\textbf{Multi-branch detector.}
We further evaluate the contribution of each branch in Sec.~\ref{sec:detector}.
The discrepancy branch is the most critical component, as removing it leads to the largest performance drop.
This agrees with our design, since this branch directly captures the robustness asymmetry cue.
The residual branch provides useful low-level forensic cues, while the semantic branch brings only a mild gain.
In particular, our low-level residual branch uses a median filter to construct the residual map.
To examine this design choice, we replace it with a Gaussian filter based alternative.
The Gaussian variant performs worse than the default design but better than removing the residual branch.
This indicates that residual cues complement foundation-space features, and that the median-based construction better preserves detection-relevant local traces.

\begin{figure}[t]
    \centering
    \includegraphics[width=1\linewidth]{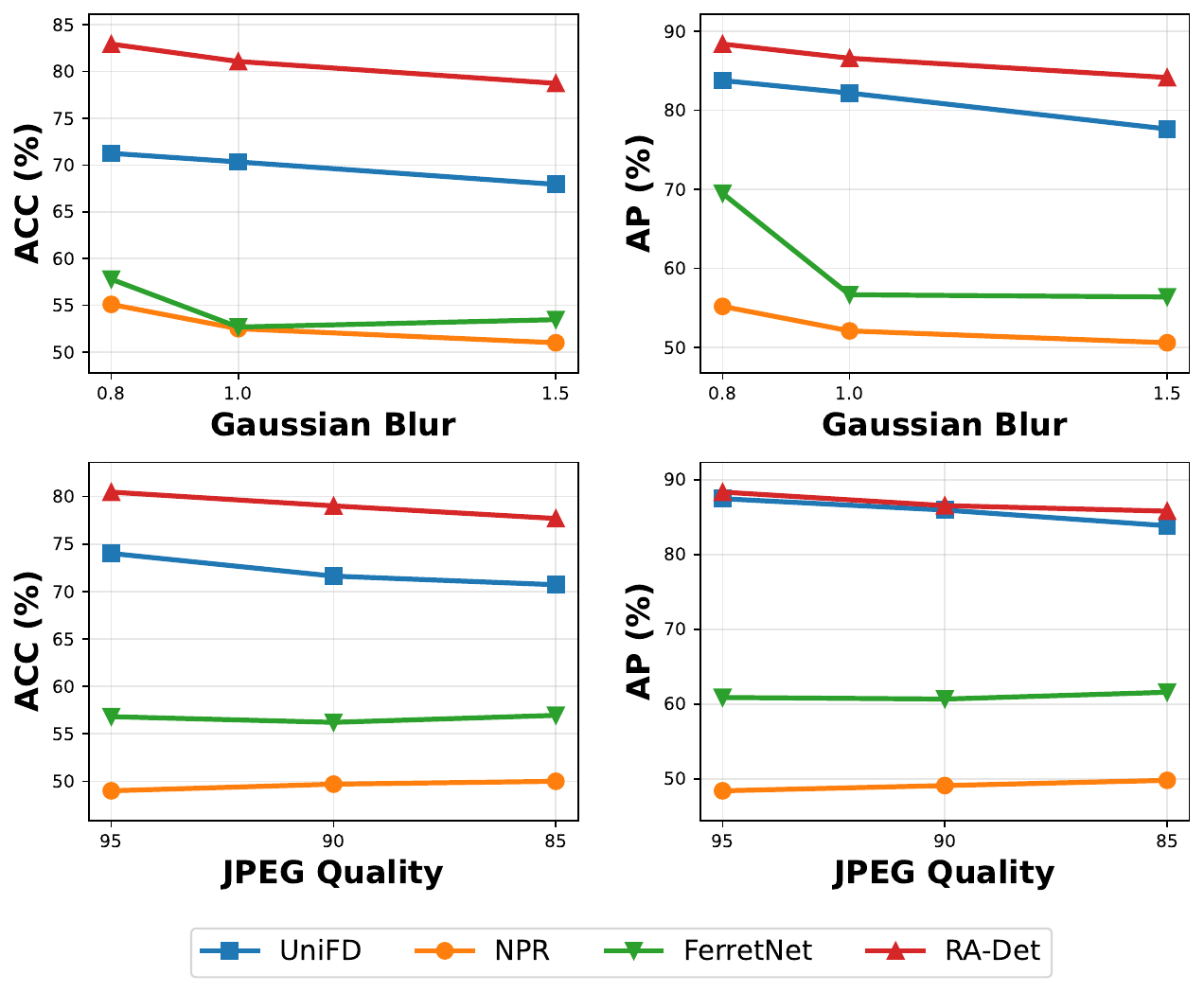}
 \caption{Performance comparison under common image perturbations. The plots show detector performance against (a) JPEG compression (QF = 95, 90 and 85) and (b) Gaussian blur ($\sigma$ = 0.8, 1.0 and 1.5). The mean scores (diamonds) indicate that our method, RA-Det, maintains the highest accuracy and AP, demonstrating superior robustness.}
    \label{fig:robustness-test}
\end{figure}

\subsection{Robustness Testing}
\label{sec:robustness}

In real-world scenarios, generative image detection systems inevitably encounter post-processing operations such as JPEG compression and Gaussian blur during image transmission and storage. Evaluating robustness under such perturbations is therefore essential for practical deployment. We conduct robustness experiments under controlled JPEG compression (QF = 95, 90, 85) and Gaussian blur ($\sigma$ = 0.8, 1.0, 1.5), and compare RA-Det with representative artifact-driven detectors (NPR and FerretNet) and the representation-driven baseline UniFD.

As shown in Fig.~\ref{fig:robustness-test}, RA-Det consistently achieves the best ACC and AP under both Gaussian blur and JPEG compression.
Artifact-driven methods degrade more clearly because these operations can disrupt the local pixel statistics and generation artifacts used for detection.
UniFD is more stable than artifact-driven baselines, but it still performs worse than RA-Det across all settings.
These results suggest that robustness asymmetry provides a reliable cue under common image degradation, supporting the practical robustness of RA-Det in post-processed scenarios.

\subsection{Efficiency Analysis}
\label{sec:efficiency}

We further analyze the computational cost of RA-Det.
RA-Det follows the common practice of using a large pretrained visual backbone for representation extraction, and additionally introduces a conditional UNet based DRP module to expose robustness asymmetry.
In our evaluation environment, RA-Det takes 25.14 ms per image, while representative detectors take 16.49 ms and 21.64 ms.
The detailed FLOPs, parameters, memory usage, and inference time are reported in Appendix~\ref{app:efficiency}.
These results show that the extra cost mainly comes from active probing on top of foundation features, while the overall runtime remains moderate.
Notably, by projecting the clean and perturbed views into the same foundation embedding space, RA-Det jointly derives semantic evidence and robustness-asymmetry cues for the two corresponding branches in Sec.~\ref{sec:detector}, without introducing separate feature extractors or requiring extra backbone passes for individual branches.
Reducing the probing cost remains an important direction for future work.

\section{Conclusion}
In this work, rather than following previous approaches that focus on identifying universal artifacts of fake images, we propose an innovative method that distinguishes real and generated images based on their inherent differences in robustness against perturbations. Experimental results demonstrate that our method not only surpasses previous artifact-based approaches but also exhibits strong generalization ability across multiple generative models. This validates that leveraging the disparity in robustness helps prevent the detector from merely learning superficial features of the training data while failing to truly distinguish real and generated images. Furthermore, robustness experiments confirm that this assumption remains valid even under image distortions, ensuring that the detection model does not fail due to minor degradations in image quality. In future work, we plan to investigate whether the robustness difference between real and fake images is a general phenomenon that extends to more diverse forms of perturbations, such as adversarial attacks.

\section*{Acknowledgements}
This work was supported by the National Key Research and Development Program of China (2023YFE0116300), the National Natural Science Foundation of China (62202205) and the Natural Science Foundation of Xinjiang Uygur Autonomous Region (2025D01A50).

\section*{Impact Statement}

This paper studies the detection of AI-generated images to improve trust in visual content and reduce risks from deceptive synthetic media. RA-Det may support content verification, platform moderation, forensic analysis, and the protection of downstream recognition systems by using robustness asymmetry as a complementary signal beyond visual artifacts. At the same time, the method should not be used as a fully automated decision maker in high-stakes scenarios, since false positives may incorrectly flag authentic images and false negatives may miss synthetic ones. RA-Det also introduces extra inference cost due to the foundation backbone and probing module, and the common single-generator training protocol may not fully reflect real-world data distributions. We therefore view this work as part of a broader defensive effort that should be combined with human oversight, continued robustness evaluation, more realistic benchmarks, and complementary provenance mechanisms.
 
\bibliography{example_paper}
\bibliographystyle{icml2026}
\clearpage
\appendix
\onecolumn

\section{Training-Source Analysis for the Universality of Robustness Asymmetry}
\label{app:training_source}

To examine whether the learned robustness-asymmetry cue depends on a particular training generator, we evaluate RA-Det under three training-source settings, covering GAN-only training, diffusion-only training, and mixed-generator training. We use LSUN~\citep{yu2015lsun} as the source of real images. For diffusion-generated images, we use the SDv1.4 subset from AIGIBench~\citep{li2026artificial}, whose synthetic images are sourced from GenImage~\citep{zhu2023genimage}. The detailed settings are defined as follows.

\noindent\textbf{Setting-I.} 360K real images from LSUN and 360K fake images from ProGAN~\citep{karras2017progressive}.

\noindent\textbf{Setting-II.} 52,922 real images from LSUN and 72,000 fake images from the SDv1.4 subset of AIGIBench.

\noindent\textbf{Setting-III.} 72,000 real images from LSUN and 72,000 fake images sampled from both ProGAN and the SDv1.4 subset of AIGIBench.

As shown in Table~\ref{tab:training_source_acc_ap}, RA-Det achieves stable average performance under all three training-source settings.
Specifically, the ProGAN-only, SDv1.4-only, and mixed-generator settings achieve average accuracies of 93.47\%, 92.70\%, and 93.33\%, respectively, with corresponding AP scores of 97.00\%, 96.99\%, and 96.69\%.
The small variation across these settings indicates that the learned robustness-asymmetry cue is not specific to the generator used for training DRP.

These results support our hypothesis that robustness asymmetry is a general property of generated images rather than an artifact tied to ProGAN or GAN-based synthesis.
Importantly, training DRP with diffusion-generated images still yields strong overall performance, showing that the asymmetry exploited by RA-Det remains valid under diffusion-based training.
The per-generator results further suggest that diffusion-source training can improve performance on several diffusion-based and recent generators, supporting the transferability of robustness asymmetry across different synthesis paradigms.
Full per-generator results are reported in Table~\ref{tab:training_source_acc_ap}.

\begin{table*}[h]
\centering
\scriptsize
\setlength{\tabcolsep}{2.2pt}
\renewcommand{\arraystretch}{1.05}
\caption{
Per-generator performance of RA-Det under different training-source settings.
Setting I, Setting II, and Setting III correspond to ProGAN-only, SDv1.4-only, and mixed ProGAN+SDv1.4 training, respectively.
Generators are grouped into recent generative models and established benchmark generators.
}
\label{tab:training_source_acc_ap}

\begin{subtable}{\textwidth}
\centering
\caption{
Accuracy (ACC, \%) across individual generators under each training-source setting. The last column reports the average across generators.
}
\label{tab:training_source_acc}
\resizebox{\textwidth}{!}{
\begin{tabular}{lccccccc|cccccccccccccccc|c}
\toprule
\multirow{2}{*}{Setting}
& \multicolumn{7}{c|}{Recent Generative Models}
& \multicolumn{16}{c|}{Established Benchmark Generators}
& \multirow{2}{*}{Avg.} \\
\cmidrule(lr){2-8} \cmidrule(lr){9-24}
& SDXL & SDv3.5 & FLUX & FLUX1-dev & MJ-v6 & Infinity & VAR
& ProGAN & StyleGAN & BigGAN & CycleGAN & StarGAN & GauGAN & StyleGAN2
& WFIR & ADM & Glide & SDv1.4 & SDv1.5 & VQDM & MJ & Wukong & DALLE2
& \\
\midrule
Setting I
& 96.29 & 75.61 & 57.19 & 80.58 & 74.97 & 88.32 & 91.63
& 99.98 & 94.98 & 98.35 & 96.52 & 99.92 & 99.51 & 95.54
& 72.20 & 88.89 & 92.94 & 96.50 & 96.01 & 94.43 & 80.04 & 95.33 & 94.30
& 93.47 \\
Setting II
& 94.42 & 91.85 & 85.35 & 90.00 & 81.23 & 97.08 & 79.60
& 95.49 & 96.48 & 90.83 & 82.17 & 95.75 & 91.21 & 96.21
& 86.41 & 78.87 & 95.60 & 98.95 & 98.65 & 93.88 & 94.61 & 97.92 & 91.10
& 92.70 \\
Setting III
& 90.58 & 82.35 & 62.72 & 81.21 & 72.83 & 92.03 & 91.65
& 99.39 & 93.32 & 96.10 & 95.68 & 100.00 & 93.10 & 90.23
& 85.10 & 85.32 & 93.17 & 96.81 & 96.99 & 92.32 & 87.07 & 95.18 & 93.95
& 93.33 \\
\bottomrule
\end{tabular}
}
\end{subtable}

\vspace{0.8em}

\begin{subtable}{\textwidth}
\centering
\caption{
Average precision (AP, \%) across individual generators under each training-source setting. The last column reports the average across generators.
}
\label{tab:training_source_ap}
\resizebox{\textwidth}{!}{
\begin{tabular}{lccccccc|cccccccccccccccc|c}
\toprule
\multirow{2}{*}{Setting}
& \multicolumn{7}{c|}{Recent Generative Models}
& \multicolumn{16}{c|}{Established Benchmark Generators}
& \multirow{2}{*}{Avg.} \\
\cmidrule(lr){2-8} \cmidrule(lr){9-24}
& SDXL & SDv3.5 & FLUX & FLUX1-dev & MJ-v6 & Infinity & VAR
& ProGAN & StyleGAN & BigGAN & CycleGAN & StarGAN & GauGAN & StyleGAN2
& WFIR & ADM & Glide & SDv1.4 & SDv1.5 & VQDM & MJ & Wukong & DALLE2
& \\
\midrule
Setting I
& 91.34 & 81.65 & 61.50 & 88.40 & 70.93 & 80.09 & 84.16
& 100.00 & 98.90 & 99.83 & 99.47 & 100.00 & 99.77 & 99.15
& 81.13 & 95.07 & 97.83 & 99.08 & 98.92 & 98.68 & 87.53 & 98.33 & 98.14
& 97.00 \\
Setting II
& 98.95 & 96.14 & 91.01 & 96.44 & 89.96 & 99.51 & 88.75
& 99.00 & 99.27 & 96.44 & 89.55 & 99.27 & 97.32 & 99.37
& 93.28 & 87.76 & 98.34 & 99.83 & 99.83 & 98.29 & 98.11 & 99.72 & 96.10
& 96.99 \\
Setting III
& 94.69 & 74.21 & 55.80 & 88.89 & 75.64 & 94.49 & 96.41
& 99.95 & 97.84 & 98.49 & 99.17 & 100.00 & 93.82 & 96.51
& 87.57 & 93.50 & 97.03 & 99.31 & 99.32 & 96.65 & 91.49 & 98.12 & 98.43
& 96.69 \\
\bottomrule
\end{tabular}
}
\end{subtable}

\end{table*}

\section{Efficiency Analysis}
\label{app:efficiency}

To better understand the practical cost of RA-Det, we compare it with representative static detectors that use foundation model features.
All methods are evaluated under the same hardware environment and input resolution.
As shown in Table~\ref{tab:efficiency_appendix}, RA-Det introduces additional computation due to the DRP module and the second feature extraction pass on the perturbed image.
Nevertheless, the inference time remains within a moderate range compared with static foundation feature based detectors.

\begin{table}[tb]
\centering
\small
\setlength{\tabcolsep}{5pt}
\caption{
Computational cost comparison with representative static detectors.
All methods are evaluated under the same hardware environment and input resolution.
}
\label{tab:efficiency_appendix}
\begin{tabular}{lcccc}
\toprule
\textbf{Method} & \textbf{FLOPs (G)} & \textbf{Params (M)} & \textbf{Peak Mem. (MB)} & \textbf{Time (ms)} \\
\midrule
UniFD~\cite{ojha2023fakedetect} & 103.79 & 202.05 & 1697.95 & 16.49 \\
C2P-CLIP~\cite{tan2025c2p} & 207.23 & 303.97 & 1835.00 & 21.64 \\
RA-Det (Ours) & 281.48 & 378.12 & 1962.00 & 25.14 \\
\bottomrule
\end{tabular}
\end{table}

\section{On DIRE and Dataset–Format Biases}
\label{app:dire_bias}
The notably high performance of DIRE on diffusion-generated images is largely enabled by unintended dataset biases rather than purely generator-specific artifacts. DIRE\cite{wang2023dirediffusiongeneratedimagedetection} is trained on a benchmark where synthetic images are stored in lossless PNG format and often at fixed resolutions, while real photographs are saved in JPEG format with varying compression levels and sizes. In effect, the detector learns to pick up on format cues (for example, JPEG compression artifacts or size/resolution uniformity \cite{grommelt2024fake}) rather than genuine generative fingerprints. When these format differences are removed or equalized, its performance degrades substantially and cross-generator transfer drops sharply

\section{Lower-Bound for the Robustness Asymmetry}
\label{sec:thm_proof}
\subsection{Theoretical Derivation}

We derive a lower bound for the robustness gap between real and model-generated images under a small, isotropic probe in feature space. We begin from the SIDE memorization divergence~\cite{chen2025side}, which quantifies generator memorization around training samples: smaller divergence indicates stronger memorization. We then connect memorization to the expected feature shift under small perturbations, yielding a quantitative lower bound on the robustness difference.

\paragraph{Notation.}
Let the ambient space be $\mathcal X\subset\mathbb R^{N}$; $p$ the real image distribution; $p_\theta$ the model-generated distribution; and $D=\{x_i\}_{i=1}^{N_{\mathrm{tr}}}\subset\mathcal X$ the generator's training set.
A \emph{fixed $C^2$ encoder} is a twice continuously differentiable map $f:\mathcal X\to\mathbb R^{d}$ whose parameters are frozen. Denote the Jacobian by $J_f(x)\in\mathbb R^{d\times N}$ and the Jacobian energy by $G(x):=\|J_f(x)\|_F^2$, assumed bounded on $\mathcal X$: $0\le G(x)\le B$.
For a small, isotropic probe $\boldsymbol\eta\in\mathbb R^{N}$ with $\mathbb E[\boldsymbol\eta]=0$ and $\mathbb E[\boldsymbol\eta\boldsymbol\eta^\top]=(\varepsilon^2/N)I_{N}$, define the feature shift
\begin{equation}
\mathrm{Shift}_\varepsilon(x):=\mathbb E_{\boldsymbol\eta}\!\left[\|f(x+\boldsymbol\eta)-f(x)\|_2^2\right].
\label{eq:shift-def-main}
\end{equation}
All expectations $\mathbb E_\mu[\cdot]$ below are with respect to $x\sim\mu$ unless stated otherwise.

\paragraph{Memorization divergence (SIDE)~\cite{chen2025side,chen2025leakyclip}.}
Fix a radius $\varepsilon_0>0$ and define the training-neighborhood mixture
\begin{equation}
q_{\varepsilon_0}(x):=\frac{1}{N_{\mathrm{tr}}}\sum_{i=1}^{N_{\mathrm{tr}}} \mathcal N\!\Big(x\,\Big|\,x_i,\varepsilon_0^2 I_{N}\Big).
\end{equation}
SIDE memorization is
\begin{equation}
M\!\left(D;p_\theta,\varepsilon_0\right):=
D_{\mathrm{KL}}\!\left(q_{\varepsilon_0}\,\|\,p_\theta\right),
\end{equation}
smaller when $p_\theta$ puts more mass near training examples. Define
\begin{equation}
\Delta\;:=\;\mathbb{E}_{x\sim q_{\varepsilon_0}}[G(x)]\;-\;\mathbb{E}_{x\sim p}[G(x)].
\label{eq:delta-gap}
\end{equation}

\paragraph{Tangent/normal anisotropy (modeling hypothesis).}
Motivated by recent evidence that augmentation and architectural equivariance induce on-manifold invariances in modern encoders~\cite{oquab2023dinov2,caron2021dino,rojasgomez2024shiftvit,xu2023egevit},
we \emph{refer to} Assumption~\ref{assum:main} (stated elsewhere): natural images lie locally on an $m$-dimensional manifold $\mathcal M\subset\mathbb R^N$ and a fixed $C^2$ encoder $f$ has smaller local sensitivity along $T_x\mathcal M$ and larger sensitivity along $N_x\mathcal M$.

\paragraph{Rationale.}
Augmentation/equivariance suppresses representation change along common on-manifold transformations (tangents), while off-manifold perturbations are not explicitly regularized~\cite{oquab2023dinov2,caron2021dino,xu2023egevit}.
Thus, a small Gaussian tube around training samples places mass where $G$ tends to be larger, leading to $\Delta>0$ for sufficiently small radius.

\begin{proof}[Derivation for Lemma~\ref{lem:delta-positive}]
Let $U=\{x:\mathrm{dist}(x,\mathcal M)<\bar\varepsilon_0\}$ be a tubular neighborhood of $\mathcal M$ with smooth nearest-point projection $\pi:U\to\mathcal M$.
Fix $y\in\mathcal M$ and write the \emph{ambient} isotropic noise as $\boldsymbol\eta\sim\mathcal N(0,\varepsilon_0^2 I_{N}/N)$, so $\mathbb E\|\boldsymbol\eta\|^2=\varepsilon_0^2$.
Let $P_T(y)$ and $P_N(y)=I-P_T(y)$ be the orthogonal projectors onto $T_y\mathcal M$ and $N_y\mathcal M$, and decompose
\[
t:=P_T(y)\boldsymbol\eta\in T_y\mathcal M,\qquad s:=P_N(y)\boldsymbol\eta\in N_y\mathcal M,\qquad \boldsymbol\eta=t+s.
\]
By orthogonality and isotropy,
\[
\mathbb E\|t\|^2=\tfrac{\varepsilon_0^2}{N}\mathrm{tr}P_T=\tfrac{\varepsilon_0^2}{N}m,\ \
\mathbb E\|s\|^2=\tfrac{\varepsilon_0^2}{N}(N-m).
\]
Since $f\in C^2$, $G(x)=\|J_f(x)\|_F^2$ is $C^2$ on $U$. A second-order Taylor expansion at $y$ gives
\[
G(y+\boldsymbol\eta)=G(y)+\nabla G(y)\!\cdot\boldsymbol\eta+\tfrac12\,\boldsymbol\eta^\top H_G(y)\,\boldsymbol\eta+R(y,\boldsymbol\eta),
\]
with $|R(y,\boldsymbol\eta)|\le C\|\boldsymbol\eta\|^3$ for some uniform $C$ (compactness of $U$).
Taking expectation (mean-zero noise) removes the linear term:
\[
\mathbb E\big[G(y+\boldsymbol\eta)\big]-G(y)
=\tfrac12\,\mathbb E\!\big[t^\top H_G(y)\,t\big]+\tfrac12\,\mathbb E\!\big[s^\top H_G(y)\,s\big] + O(\varepsilon_0^3).
\]
Because the Assumption \ref{assum:main}:
\[
\lambda_T^{\max}:=\sup_{\substack{y\in\mathcal M\\ \|u\|=1,\,u\in T_y\mathcal M}} u^\top H_G(y)\,u,\qquad
\lambda_\perp^{\min}:=\inf_{\substack{y\in\mathcal M\\ \|v\|=1,\,v\in N_y\mathcal M}} v^\top H_G(y)\,v,\qquad
\lambda_\perp^{\min}>\lambda_T^{\max}.
\]
Then
\[
\mathbb E[t^\top H_G(y)t]\le \lambda_T^{\max}\,\mathbb E\|t\|^2,\qquad
\mathbb E[s^\top H_G(y)s]\ge \lambda_\perp^{\min}\,\mathbb E\|s\|^2,
\]
so
\[
\mathbb E\big[G(y+\boldsymbol\eta)\big]-G(y)
\ \ge\ \tfrac12\!\left(\lambda_\perp^{\min}\tfrac{N-m}{N}-\lambda_T^{\max}\tfrac{m}{N}\right)\varepsilon_0^2 - C'\varepsilon_0^3.
\]
Averaging over the anchors $\{x_i\}$ that define $q_{\varepsilon_0}$ and subtracting $\mathbb E_{x\sim p}[G(x)]$ yields
$\Delta \ge c_1\,\varepsilon_0^2 - C'\varepsilon_0^3$ with $c_1:=\tfrac12(\lambda_\perp^{\min}\tfrac{N-m}{N}-\lambda_T^{\max}\tfrac{m}{N})>0$.
Choosing $\bar\varepsilon_0$ small so that $C'\bar\varepsilon_0 \le c_1/2$ gives
$\Delta\ge c_0:=\tfrac12 c_1\,\varepsilon_0^2>0$ for all $\varepsilon_0\in(0,\bar\varepsilon_0]$.
\end{proof}


\begin{proof}[Derivation for Theorem \ref{thm:shift-gap}]
\textbf{(i) Small-noise expansion.} By Taylor and isotropy (odd moments vanish) one gets
\begin{equation}
\mathrm{Shift}_\varepsilon(x)=\frac{\varepsilon^2}{n}\,G(x)+R_\varepsilon(x),\qquad |R_\varepsilon(x)|\le C_H\,\varepsilon^4,
\label{eq:avg-shift}
\end{equation}
where $C_H$ depends on bounded Hessians and fourth moments of the probe (Isserlis/Wick; uniform-sphere moments).
Averaging \eqref{eq:avg-shift} over $\mu\in\{p_\theta,p\}$ and subtracting:
\begin{equation}
\mathbb E_{p_\theta}[\mathrm{Shift}_\varepsilon]-\mathbb E_{p}[\mathrm{Shift}_\varepsilon]
=\frac{\varepsilon^2}{n}\Big(\mathbb E_{p_\theta}[G]-\mathbb E_{p}[G]\Big)+O(\varepsilon^4).
\label{eq:gap-leading}
\end{equation}
\textbf{(ii) DV variational bound + Hoeffding.} For any $\lambda>0$,
\[
M=D_{\mathrm{KL}}(q_{\varepsilon_0}\|p_\theta)
\ \ge\ \lambda\,\mathbb E_{q_{\varepsilon_0}}[G]-\log \mathbb E_{p_\theta}[e^{\lambda G}],
\]
and since $G\in[0,B]$, Hoeffding’s lemma gives
$\log \mathbb E_{p_\theta}[e^{\lambda G}]\le \lambda\,\mathbb E_{p_\theta}[G]+\frac{\lambda^2B^2}{8}$.
Optimizing the resulting quadratic over $\lambda$ yields
\begin{equation}
\mathbb E_{p_\theta}[G]\ \ge\ \mathbb E_{q_{\varepsilon_0}}[G]\ -\ B\,\sqrt{M/2}.
\label{eq:dv-hoeffding}
\end{equation}
\textbf{(iii) Combine.} Add and subtract $\mathbb E_{q_{\varepsilon_0}}[G]$ and use $\Delta$ from \eqref{eq:delta-gap}:
\[
\mathbb E_{p_\theta}[G]-\mathbb E_{p}[G]\ \ge\ \Delta - B\sqrt{M/2}.
\]
Insert into \eqref{eq:gap-leading} to obtain \eqref{eq:main-gap}.
\end{proof}

\paragraph{Interpretation.}
Lemma~\ref{lem:delta-positive} formalizes that, for small tubes around training anchors, $q_{\varepsilon_0}$ places mass where the encoder’s Jacobian energy is (on average) larger than under $p$; thus $\Delta>0$.
Theorem~\ref{thm:shift-gap} then states that the expected feature-shift under $p_\theta$ exceeds that under $p$ by a margin controlled below by $\tfrac{\varepsilon^2}{n}\big(\Delta- B\sqrt{M/2}\big)$ up to $O(\varepsilon^4)$.
Hence, stronger memorization (smaller $M$) and a positive local anisotropy margin (captured in $\Delta$) provably increase the robustness asymmetry that our detector exploits.

\subsection{Empirical Validation of the Shift Lower Bound}
\label{sec:appendix-empirical}

\paragraph{Goal.}
We empirically validate the theoretical prediction of Theorem~\ref{thm:shift-gap}, which links a model’s memorization level to the observable gap between the shift statistics of generated and training samples.
The theorem establishes two main trends:
\begin{enumerate}[label=(\roman*),itemsep=1pt,leftmargin=1.3em]
    \item \textbf{Memorization effect.} When the model exhibits stronger memorization, the SIDE divergence $M(D;p_\theta,\varepsilon_0)$ decreases, resulting in a \emph{larger lower bound} on the differential shift 
    \[
    \Delta(\varepsilon)=\mathbb{E}_{x\sim p_\theta}[\mathrm{Shift}_\varepsilon(x)] - \mathbb{E}_{x\sim p}[\mathrm{Shift}_\varepsilon(x)].
    \]

    \item \textbf{Noise–dependence.} As the probing noise $\varepsilon$ increases from zero, $\Delta(\varepsilon)$ initially grows because the added perturbation accentuates the model’s sensitivity along tangent directions.
    However, for large $\varepsilon$, the perturbation dominates the manifold structure, leading to a decay in $\Delta(\varepsilon)$. 
    Hence, the theoretical prediction is a \emph{non-monotonic} dependence of $\Delta(\varepsilon)$: increasing at small $\varepsilon$, then decreasing after a critical point.
\end{enumerate}

\paragraph{Training-stage analysis.}
Evaluating the SIDE divergence $M(D;p_\theta,\varepsilon_0)$ directly is computationally prohibitive in high-dimensional image space.
As an initial diagnostic, we first examine how the feature-shift discrepancy changes across diffusion checkpoints trained for different numbers of epochs.
Prior studies suggest that memorization in diffusion models can vary with training progress, and longer training may increase memorization in some settings.
However, we do not treat training epochs as a validated monotonic proxy for SIDE divergence.
Instead, the epoch-based analysis is used only as a training-stage trend analysis, which provides qualitative evidence for how the discrepancy evolves during model training.

\paragraph{Experimental setup for the training-stage trend.}
We train a Denoising Diffusion Probabilistic Model (DDPM) with approximately 50 million parameters on the CelebA-HQ dataset.
Training epochs range from $200$ to $5200$, providing checkpoints at different training stages.
For each checkpoint, we generate a fixed number of synthetic images and compare their encoder-space shift statistics to those computed on the original training data.

We use several fixed pretrained encoders to define the feature space: ViT-B/32, ViT-B/16, ViT-L/14, and DINOv2-ViT-B/14.
For each encoder, we compute:
\begin{enumerate}[leftmargin=1.3em,itemsep=1pt]
    \item \textbf{Differential shift vs.\ $\varepsilon$:}
    The average difference $\Delta(\varepsilon)$ across a grid of Gaussian perturbation magnitudes $\varepsilon$.
    This curve reflects how the differential shift evolves with probing noise intensity.
    \item \textbf{Differential shift vs.\ training stage:}
    The mean differential shift aggregated over small-$\varepsilon$ values, plotted as a function of training epochs.
    This analysis examines whether the discrepancy grows as training progresses, without assuming a strict monotonic mapping from epochs to SIDE divergence.
\end{enumerate}

\paragraph{Findings and interpretation.}
The results are summarized in Figures~\ref{fig:empirical-lb}a–h.
Across all encoders, we consistently observe the two–phase structure predicted by theory.
At small noise levels ($\varepsilon \in [3,10]$), $\Delta(\varepsilon)$ increases sharply, corresponding to the regime where perturbations probe tangent directions of the data manifold.
Beyond a turning point $\varepsilon_{\text{turn}}$, the differential shift gradually declines as perturbations enter the normal subspace regime and the encoder’s local linear approximation breaks down.
This non-monotonic behavior manifests as an ``inverted U-shape'' or S-shaped curve, in line with the analytic lower-bound structure derived in Theorem~\ref{thm:shift-gap}.

Furthermore, when analyzing $\Delta$ against training epochs (used as a proxy for memorization), we find that higher epoch counts correspond to larger differential shifts at small $\varepsilon$, indicating stronger memorization bias. 

\paragraph{AMS-based direct memorization validation.}
To further avoid relying on training epochs as an indirect memorization indicator, we add a second validation using the Average Memorization Score (AMS) from SIDE~\cite{chen2025side}.
AMS measures the fraction of generated images that are highly similar to images in the training set under a fixed similarity criterion.
Thus, AMS directly quantifies how many generated samples closely match the training data, making it a more appropriate empirical memorization measure than raw training progress.

Importantly, SIDE uses AMS as an empirical memorization metric to validate theorem-predicted behavior derived from SIDE-divergence analysis.
Following this setup, we use AMS to test the memorization-dependent trend predicted by Theorem~\ref{thm:shift-gap}.

\begin{table}[h]
\centering
\caption{
\textbf{Linear relation between AMS and small-$\varepsilon$ discrepancy.}
For each diffusion checkpoint, we compute AMS following SIDE~\cite{chen2026side} and regress the small-$\varepsilon$ discrepancy $\bar{\Delta}_{\mathrm{small}}$ against AMS.
Across all pretrained encoders, the fitted slope is positive, indicating that higher directly measured memorization is consistently associated with larger feature-shift discrepancy.
}
\label{tab:ams-discrepancy}
\begin{tabular}{lccc}
\toprule
Encoder & Slope & Intercept & $R^2$ \\
\midrule
ViT-B/32       & 0.118  & 0.031 & 0.66 \\
ViT-B/16       & 0.129  & 0.030 & 0.59 \\
ViT-L/14       & 0.106  & 0.029 & 0.73 \\
DINOv2-B/14    & 0.0241 & 0.045 & 0.64 \\
\bottomrule
\end{tabular}
\end{table}

\paragraph{Findings from AMS-based validation.}
The results are shown in Table~\ref{tab:ams-discrepancy}.
Across all four pretrained encoders, the fitted slope between AMS and the small-$\varepsilon$ discrepancy is positive.
This indicates that checkpoints with higher directly measured memorization also exhibit larger feature-shift discrepancy between generated and real images.
The relation is consistent across ViT-B/32, ViT-B/16, ViT-L/14, and DINOv2-B/14, suggesting that the memorization-dependent trend is not tied to a single feature space.

This AMS-based validation complements the training-stage analysis in Figure~\ref{fig:empirical-lb}.
While the epoch-based plots show how the discrepancy evolves during training, the AMS-based regression directly relates the discrepancy to an empirical memorization metric.
Therefore, the combined evidence supports the memorization-dependent trend predicted by Theorem~\ref{thm:shift-gap}, without relying on the unsupported assumption that training epochs provide a monotonic proxy for SIDE divergence.

\begin{figure*}[t]
  \centering
  \setlength{\tabcolsep}{2pt}
  \begin{tabular}{cc}
    \includegraphics[width=0.45\textwidth]{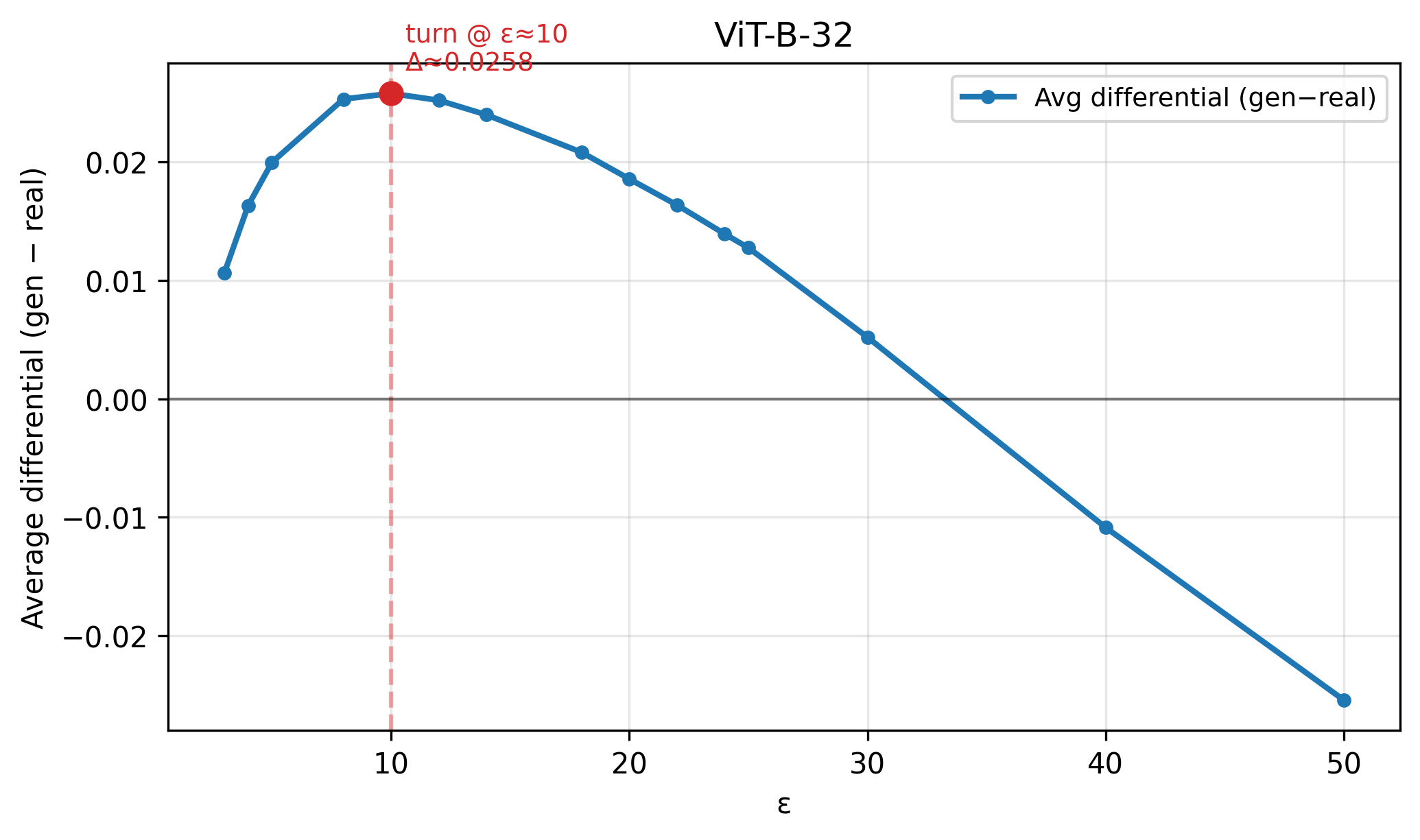} &
    \includegraphics[width=0.45\textwidth]{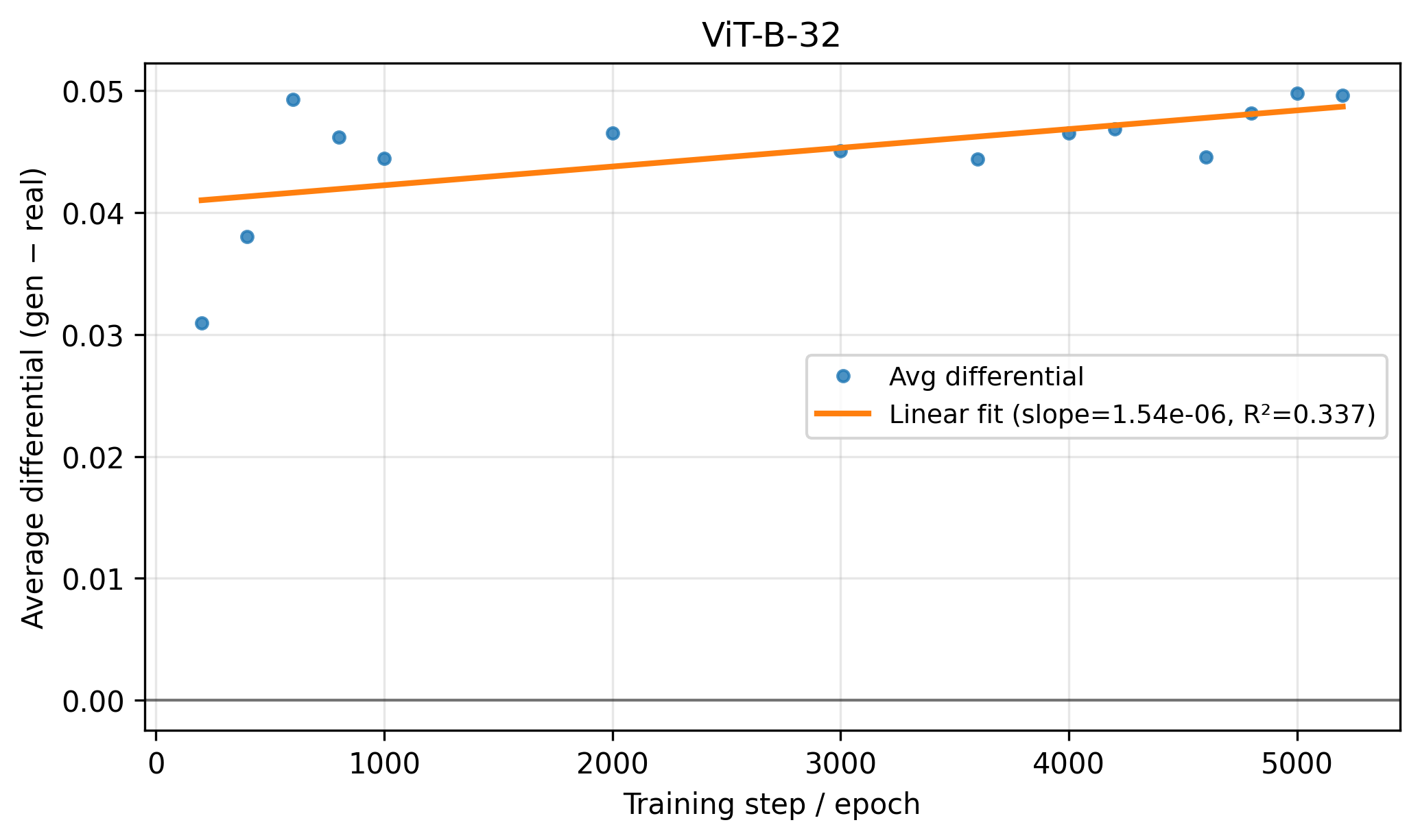} \\
    {\small (a) ViT-B/32: differential vs.\ $\varepsilon$ (two-phase)} &
    {\small (b) ViT-B/32: differential vs.\ \emph{epochs}} \\[4pt]

    \includegraphics[width=0.45\textwidth]{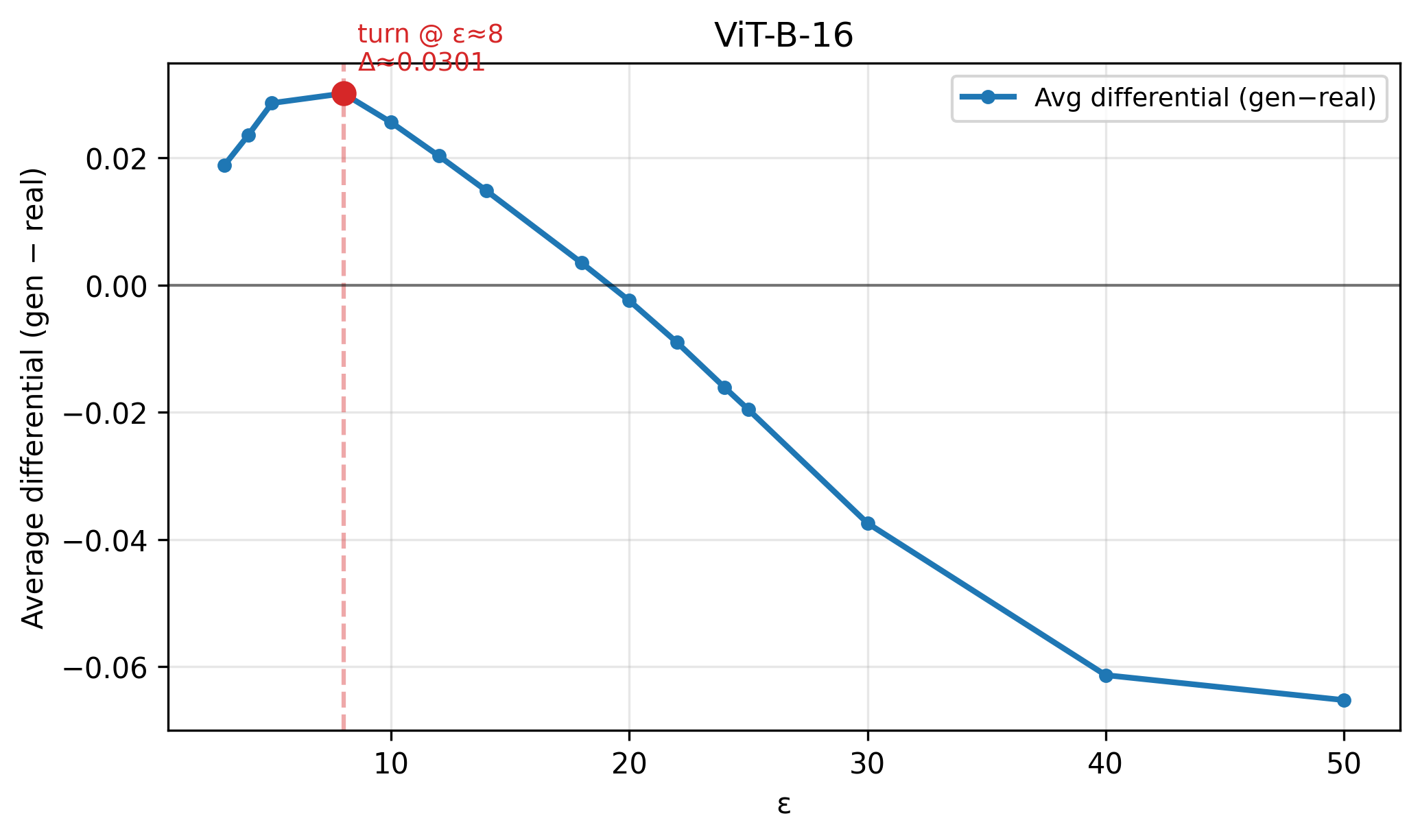} &
    \includegraphics[width=0.45\textwidth]{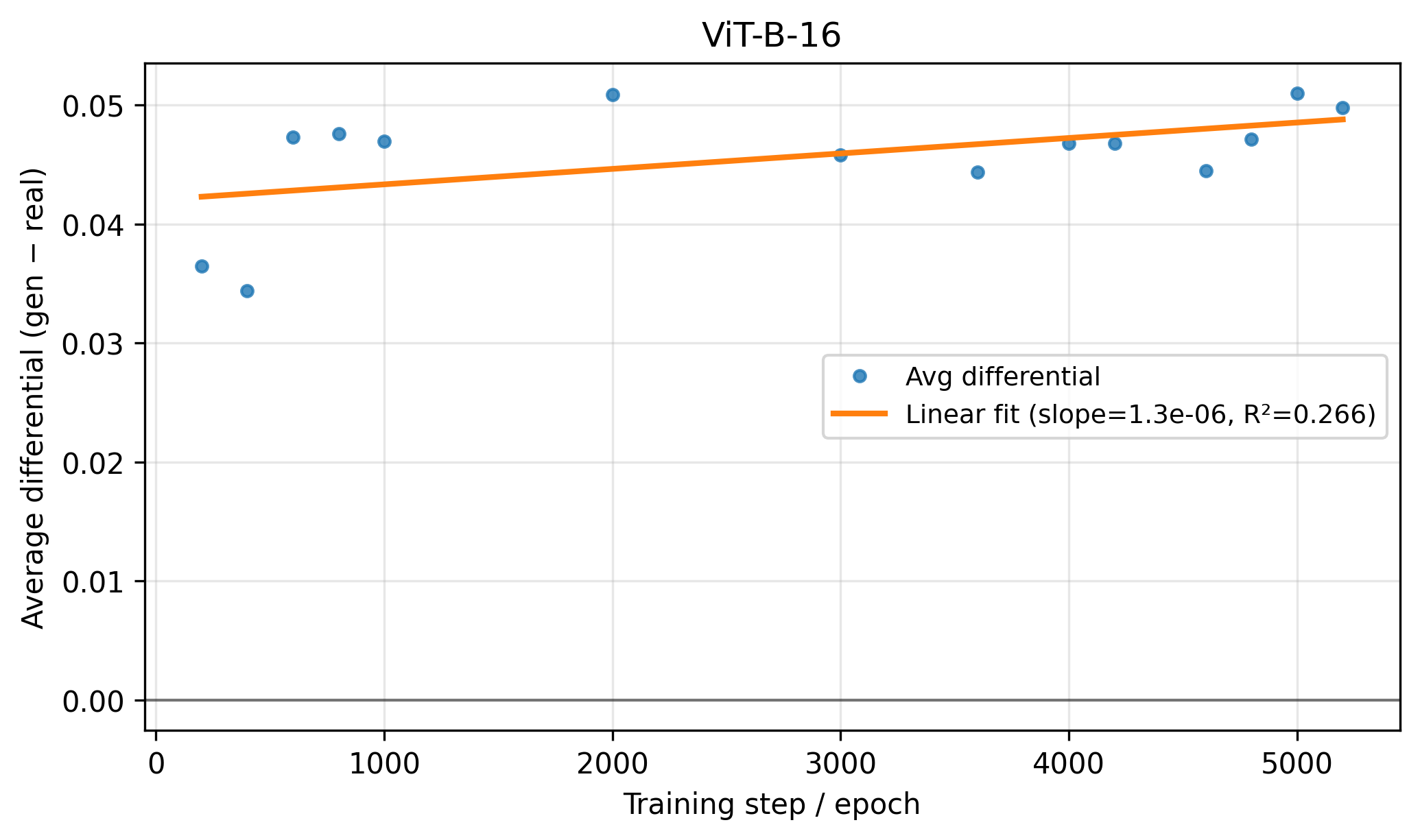} \\
    {\small (c) ViT-B/16: differential vs.\ $\varepsilon$ (two-phase)} &
    {\small (d) ViT-B/16: differential vs.\ \emph{epochs}} \\[4pt]

    \includegraphics[width=0.45\textwidth]{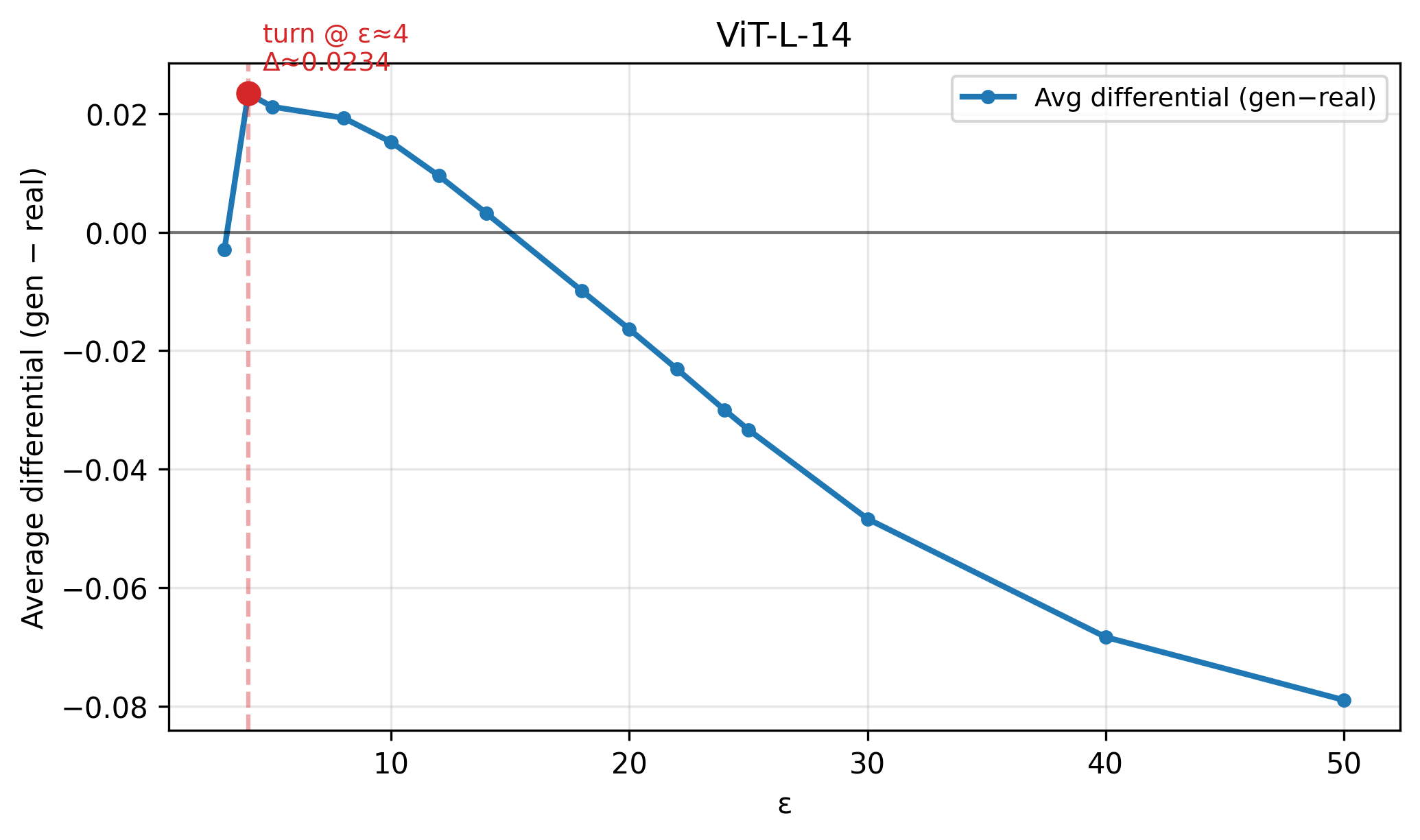} &
    \includegraphics[width=0.45\textwidth]{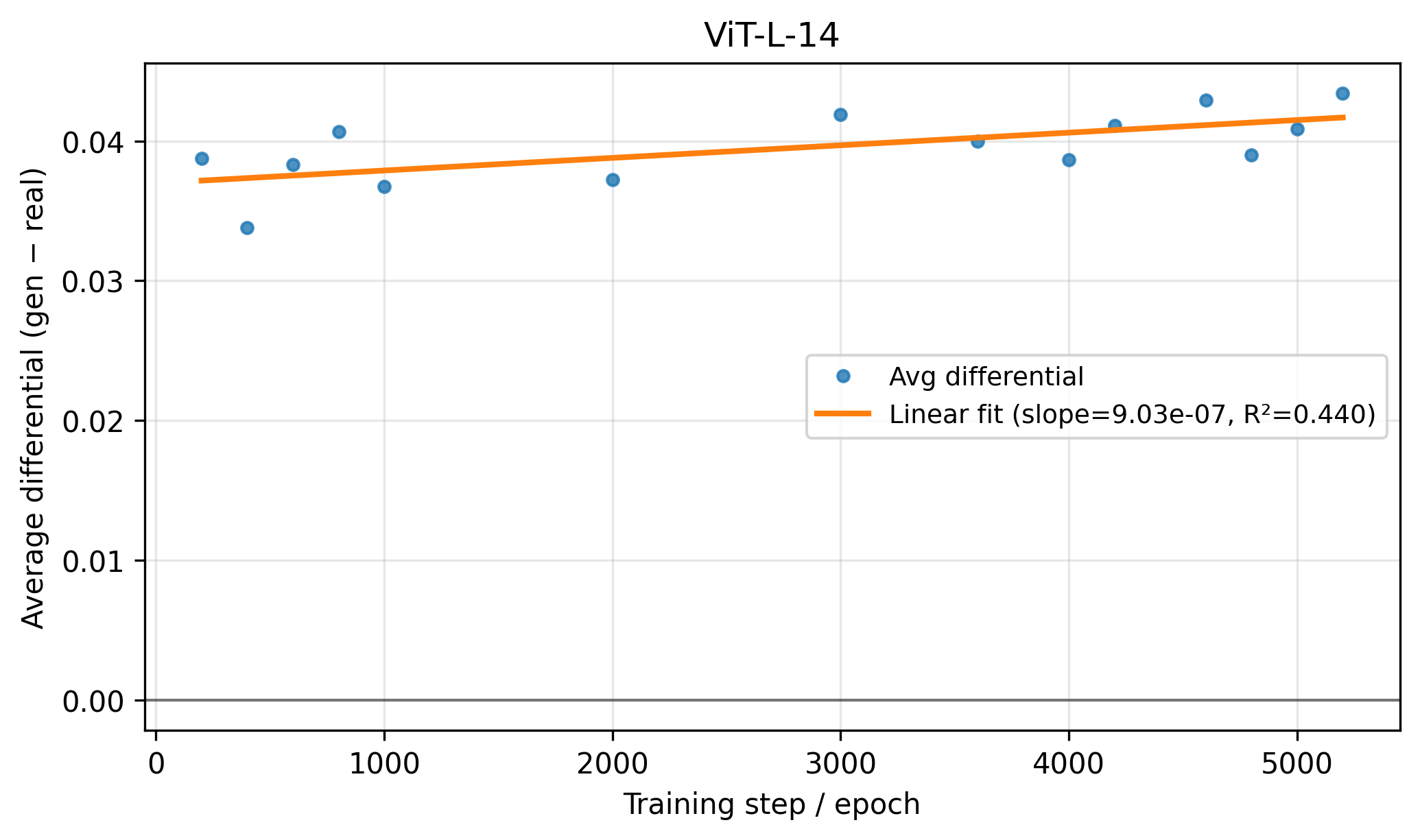} \\
    {\small (e) ViT-L/14: differential vs.\ $\varepsilon$ (two-phase)} &
    {\small (f) ViT-L/14: differential vs.\ \emph{epochs}} \\[4pt]

    \includegraphics[width=0.45\textwidth]{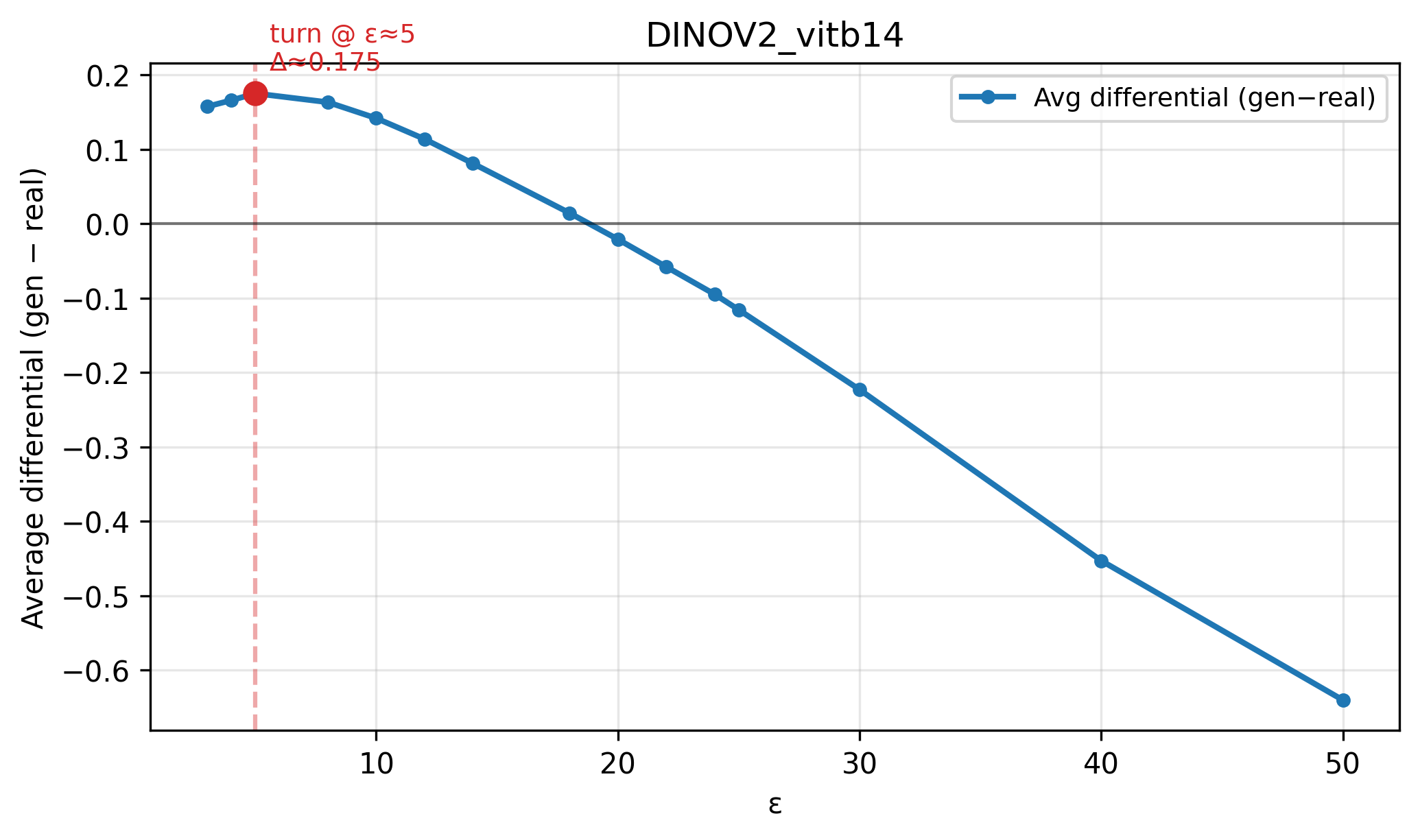} &
    \includegraphics[width=0.45\textwidth]{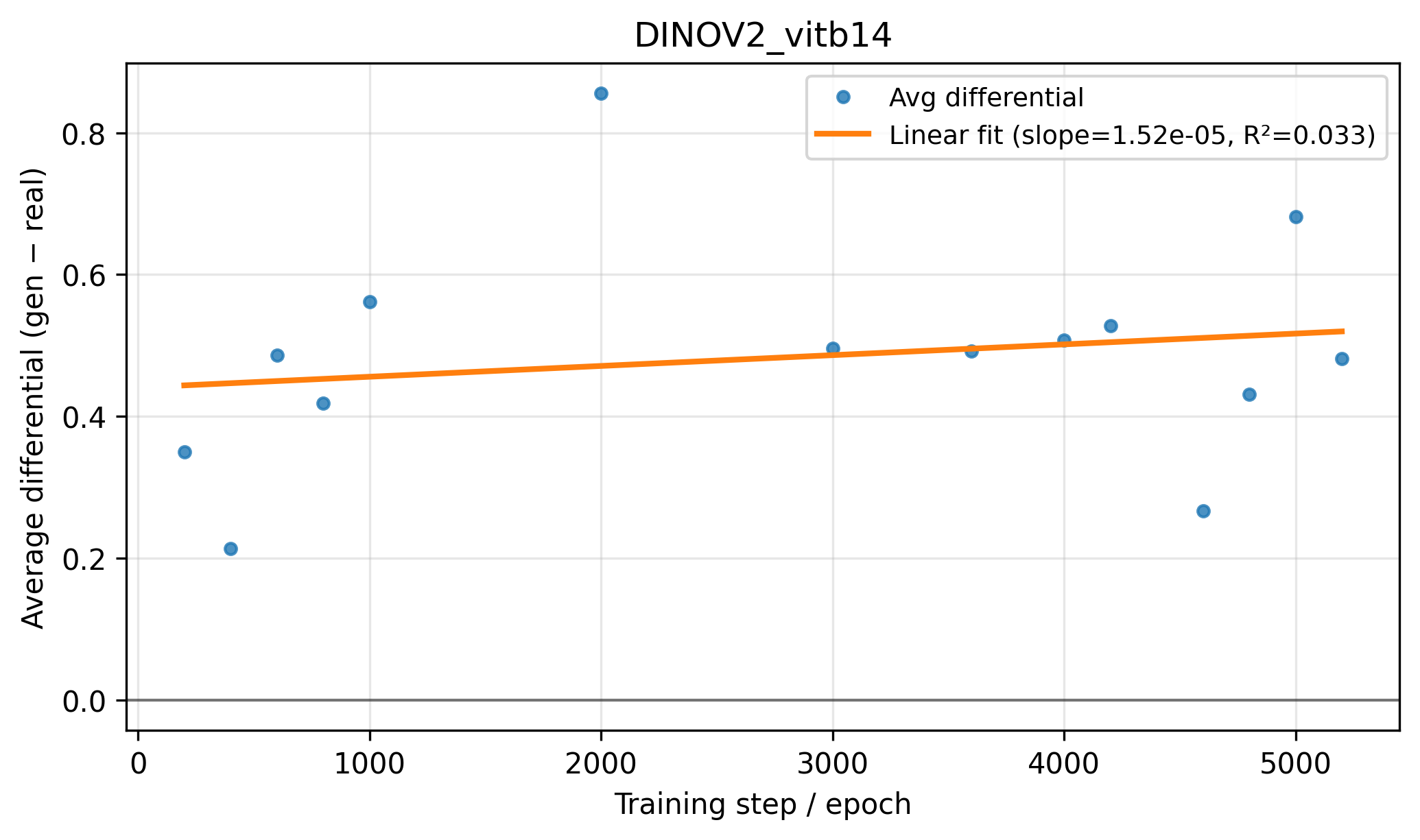} \\
    {\small (g) DINOv2-B/14: differential vs.\ $\varepsilon$ (two-phase)} &
    {\small (h) DINOv2-B/14: differential vs.\ \emph{epochs}}
  \end{tabular}
  \caption{\textbf{Empirical validation of the lower bound.}
  Left column: average differential shift $\text{Shift}_{\text{gen}}-\text{Shift}_{\text{real}}$ versus probe magnitude $\varepsilon$, showing the predicted rise at small $\varepsilon$ and decline at larger $\varepsilon$.
  Right column: average differential versus \emph{training epochs} (used as a proxy for memorization since SIDE divergence is expensive to compute). Larger epoch counts correspond to stronger memorization and yield larger small–$\varepsilon$ differentials, in line with Theorem~\ref{thm:shift-gap}.}
  \label{fig:empirical-lb}
\end{figure*}

\end{document}